\newcommand\blfootnote[1]{%
  \begingroup
  \renewcommand\thefootnote{}\footnote{#1}%
  \addtocounter{footnote}{-1}%
  \endgroup
}
\newcommand*\R{\mathbb{R}}
\DeclarePairedDelimiterX\abs[1]\lvert\rvert{%
  \ifblank{#1}{\:\cdot\:}{#1}
}
\DeclarePairedDelimiterX\norm[1]\lVert\rVert{%
  \ifblank{#1}{\:\cdot\:}{#1}
}
\DeclarePairedDelimiterX{\inner}[2]{\langle}{\rangle}{%
  \ifblank{#1}{\:\cdot\:}{#1},\ifblank{#2}{\:\cdot\:}{#2}
}
\DeclarePairedDelimiterX\set[1]{\lbrace}{\rbrace}{
  
  #1
}
\DeclarePairedDelimiterX\open[2](){#1,#2}
\DeclarePairedDelimiterX\lopen[2](]{#1,#2}
\DeclarePairedDelimiterX\ropen[2][){#1,#2}
\DeclarePairedDelimiterX\closed[2][]{#1,#2}
\DeclareMathOperator{\Var}{Var} 
\DeclareMathOperator{\Cov}{Cov} 
\DeclareMathOperator{\median}{median}
\newcommand{\poly}{\mathrm{poly}}
\newcommand{\wh}{\widehat}
\newcommand{\littlesum}{\mathop{\textstyle \sum}}
\newcommand*\E{\mathbb{E}}
\newcommand*\new{_{\textup{new}}}
\newcommand*\wPr[1]{\operatorname*{\mathnormal{#1}-Pr}}
\theoremstyle{plain}
\newtheorem{theorem}{Theorem}[section]
\newtheorem{corollary}[theorem]{Corollary}
\newtheorem{lemma}[theorem]{Lemma}
\newtheorem{claim}[theorem]{Claim}
\newtheorem{definition}[theorem]{Definition}
\theoremstyle{nonumberplain}
\newtheorem{proof}{Proof}
\theoremstyle{empty}
\newcommand{\nnew}[1]{{#1}}
\title{List-Decodable Mean Estimation via Iterative Multi-Filtering\blfootnote{Authors are in alphabetical order.}}
\author{
Ilias Diakonikolas\thanks{Supported by NSF Award CCF-1652862 (CAREER) and a Sloan Research Fellowship.}\\
University of Wisconsin, Madison\\
{\tt ilias@cs.wisc.edu}\\
\and
Daniel M. Kane\thanks{Supported by NSF Award CCF-1553288 (CAREER) and a Sloan Research Fellowship.}\\
University of California, San Diego\\
{\tt dakane@cs.ucsd.edu}\\
\and
Daniel Kongsgaard\\ University of California, San Diego\\
{\tt dkongsga@ucsd.edu}
}
\begin{document}

\maketitle

\begin{abstract}
We study the problem of {\em list-decodable mean estimation} for bounded covariance distributions.
Specifically, we are given a set $T$ of points in $\R^d$ with the promise that an unknown $\alpha$-fraction
of points in $T$, where $0< \alpha < 1/2$, are drawn from an unknown mean
and bounded covariance distribution $D$, and no assumptions are made on the remaining points.
The goal is to output a small list of hypothesis vectors such that at least one of
them is close to the mean of $D$.
We give the first practically viable estimator for this problem. In more detail,
our algorithm is sample and computationally efficient,
and achieves information-theoretically near-optimal error.
While the only prior algorithm for this setting inherently relied on the ellipsoid method,
our algorithm is iterative and only uses spectral techniques.
Our main technical innovation is the design of a soft outlier removal procedure
for high-dimensional heavy-tailed datasets with a majority of outliers.
\end{abstract}

\thispagestyle{empty}

\setcounter{page}{0}

\newpage

\section{Introduction} \label{sec:intro}

\subsection{Background and Motivation} \label{sec:background}


Estimating the mean of a high-dimensional distribution is one of the most
fundamental statistical tasks. The standard assumption
is that the input data are independent samples drawn from
a known family of distributions.
However, this is rarely true in practice and it is important to design
estimators that are {\em robust} in the presence of outliers.
In recent years, the design of outlier robust estimators has become
a pressing challenge in several data analysis tasks,
including in designing defenses against data poisoning~\cite{Barreno2010,BiggioNL12}
and in analyzing biological datasets where natural outliers are common~\cite{RP-Gen02, Pas-MG10, Li-Science08}.


The field of robust statistics~\cite{HampelEtalBook86, Huber09} traditionally
studies the setting where the fraction of outliers is a small constant (smaller than $1/2$),
and therefore the clean data is the majority of the input dataset.
Classical work in this field
pinned down the minimax risk of high-dimensional robust estimation in several settings of interest.
In contrast, until relatively recently, our understanding of even the most basic computational questions
was startlingly poor. Recent work in computer science, starting with~\cite{DKKLMS16, LaiRV16},
gave the first efficient robust estimators for various high-dimensional statistical
tasks, including mean estimation.
Since the dissemination of~\cite{DKKLMS16, LaiRV16}, there has been significant research activity
on designing efficient robust estimators in a variety of settings (see, e.g.,~\cite{DKK+17, BDLS17, DKS17-sq, DKKLMS18-soda, ChengDKS18, KlivansKM18, KStein17, HopkinsL18, DKS19, DKK+19-sever, DKKPS19-sparse}).
The reader is referred to~\cite{DK20-survey} for a recent survey of the extensive recent literature.


The aforementioned literature studies the setting where the clean data (inliers) are the majority of the input dataset.
{\em In this paper, we study the algorithmic problem of high-dimensional mean estimation
in the more challenging regime where the fraction $\alpha$
of inliers is small -- strictly smaller than $1/2$.}
This regime is fundamental in its own right and is motivated by a number of machine learning applications,
e.g., in crowdsourcing~\cite{SteinhardtVC16, SKL17, MeisterV18}).

Mean estimation with a majority of outliers was first studied in~\cite{CSV17}. We note that, in the $\alpha<1/2$ regime,
it is information-theoretically impossible to estimate the mean with a single hypothesis. Indeed, an adversary can
produce $\Omega(1/\alpha)$ clusters of points each drawn from a
``good'' distribution with different mean. Even if the algorithm could
learn the distribution of the samples exactly, it would still not be able to
identify which cluster is the correct one. Hence,
the definition of ``learning'' must be relaxed. In particular,
the algorithm should be allowed to return {\em a small list of hypotheses} with the
guarantee that \emph{at least one} of the hypotheses is close to the true mean.
This is the model of {\em list-decodable learning}~\cite{BBV08}.
It should be noted that in list-decodable learning, it is often information-theoretically
necessary for the error to increase as the fraction $\alpha$ goes to $0$.

\cite{CSV17} gave an algorithm for list-decodable mean estimation on $\R^d$
under the assumption that the inliers are drawn from a
distribution $D$ with bounded covariance, i.e., $\Sigma \preceq \sigma^2 I$.
The~\cite{CSV17} algorithm has sample complexity $n = \Omega(d/\alpha)$,
runs in $\poly(n, d, 1/\alpha)$ time, and outputs a list of $O(1/\alpha)$ hypotheses
one of which is within $\ell_2$-distance $\tilde{O}(\alpha^{-1/2})$ from the true mean of $D$.
The sample complexity of the aforementioned algorithm is optimal, within constant factors,
and subsequent work~\cite{DiakonikolasKS18-mixtures}
showed that the information-theoretically optimal error is $\Theta(1/\alpha^{1/2})$
(upper and lower bound). Importantly, the~\cite{CSV17} algorithm relies on the ellipsoid method
for convex programming. Consequently, its computational complexity, though polynomially bounded,
is impractically high.


{\em The main motivation for the current paper is to design a fast, practically viable, algorithm
for list-decodable mean estimation under minimal assumptions.}
In the presence of a {\em minority} of outliers (i.e., for $\alpha>1/2$), the iterative filtering
method of~\cite{DKKLMS16, DKK+17} is a fast and practical algorithm which
attains the information-theoretically optimal error under only a bounded covariance assumption.
More recent work has also obtained near-linear time algorithms
in this setting~\cite{CDG18, DepLec19, DongH019}.
In the list-decodable setting, however, progress on faster algorithms has been slower.
Prior to the current work, the ellipsoid-based method of \cite{CSV17} was
the only known polynomial-time algorithm for mean estimation
under a bounded covariance assumption. We note that a number of more recent works developed list-decoding
algorithms for mean estimation, linear regression, and subspace recovery using the SoS
convex programming hierarchy~\cite{KStein17, KarmalkarKK19, RaghavendraY20, BK20, RY20}.
In a departure from these convex optimization methods,~\cite{DiakonikolasKS18-mixtures} obtained an iterative {\em spectral}
list-decodable mean algorithm under the much stronger assumption
that the good data is drawn from an identity covariance Gaussian.
At a high-level, in this work we provide a broad generalization of
the \cite{DiakonikolasKS18-mixtures} algorithm and techniques
to all bounded covariance distributions.

\subsection{Our Contributions} \label{ssec:results}

We start by defining the problem we study.

\begin{definition}[List-Decodable Mean Estimation.] \label{def:list-mean}
Given a set $T$ of $n$ points in $\R^d$ and a parameter $\alpha \in (0, 1/2)$
such that an $\alpha$-fraction of the points in $T$ are i.i.d. samples
from a distribution $D$ with unknown mean $\mu$ and unknown covariance $\Sigma \preceq \sigma^2 I$,
we want to output a list of $s = \poly(1/\alpha)$ candidate vectors $\{\wh{\mu}_i \}_{i \in [s]}$
such that with high probability we have that $\min_{i \in [s]} \|\wh{\mu}_i - \mu \|_2$ is small.
\end{definition}

Some comments are in order:
First, we emphasize that no assumptions are made on the remaining $(1-\alpha)$-fraction of the points in $T$.
These points can be arbitrary and may be chosen by an adversary that is computationally unbounded and
is allowed to inspect the set of inliers. The information-theoretically best possible
size of the hypotheses list is $s = \Theta(1/\alpha)$.
\nnew{Moreover, if we are given a list of $s = \poly(1/\alpha)$ hypotheses one of which is
accurate, we can efficiently post-process them
to obtain an $O(1/\alpha)$-sized list with nearly the same error guarantee,
see, e.g., Proposition B.1 of ~\cite{DiakonikolasKS18-mixtures} and Corollary 2.16 of~\cite{stein18thesis}.
For completeness, in Appendix~\ref{app:list-reduction}, we provide a simple and self-contained method.}


\medskip

In this work, we give an iterative spectral algorithm for list-decodable mean estimation under only
a bounded covariance assumption that matches the sample complexity and accuracy of the previous
ellipsoid-based algorithm~\cite{CSV17}, while being significantly faster and potentially practical.

\begin{theorem}[Main Algorithmic Result]\label{thm:main-intro}
Let $T$ be a set of $n = \Omega(d/\alpha)$ points in $\R^d$
with the promise that an unknown $\alpha$-fraction of points in $T$, $0<\alpha<1/2$, are drawn
from a distribution $D$ with unknown mean and unknown bounded covariance $\Sigma \preceq \sigma^2 I$.
There is an algorithm that, on input $T$ and $\alpha$, runs in $\tilde{O}(n^2 d / \alpha^2)$ time
and outputs a list of $O(1/\alpha^2)$ hypothesis vectors such that
with high probability at least one of these vectors is within $\ell_2$-distance
$O( \sigma\log(1/\alpha)/\sqrt{\alpha})$ from the mean of $D$.
\end{theorem}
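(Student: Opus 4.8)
The plan is to follow the iterative multi-filtering paradigm: maintain a collection of weight functions over the points, each representing a candidate ``cluster'' that should contain most of the inliers, and repeatedly either (i) certify that the empirical covariance under a weight function is close to $\sigma^2 I$ in spectral norm---in which case the weighted mean is a good hypothesis---or (ii) use the top eigenvector of the (large) empirical covariance to perform a \emph{soft} outlier-removal step. The key difficulty, and the reason a naive one-dimensional filter fails here, is that in the list-decodable regime the outliers are the majority, so a standard filter that removes a fixed fraction of mass in the bad direction may remove too much inlier mass; the fix is to allow the procedure to \emph{split} a weight function into several weight functions (a ``multifilter'') so that at least one child retains essentially all the inlier mass while having substantially reduced total mass or reduced variance in the bad direction. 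I would first set up the deterministic regularity conditions on $T$ that hold with high probability when $n = \Omega(d/\alpha)$: the inlier set $S$ (of size $\alpha n$) has empirical mean within $O(\sigma/\sqrt{\alpha})$ of $\mu$, empirical covariance $\preceq O(\sigma^2) I$, and moreover every large subset of $S$ has bounded empirical variance in every direction (a ``stability''-type condition, proved via a VC/covering argument over halfspaces or via matrix concentration). All subsequent reasoning is purely deterministic given these conditions.

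Next I would define the invariant maintained along the recursion: each weight function $w$ (with $w(x) \in [0,1]$ and total mass $\ge$ some threshold) satisfies $\sum_{x \in S} (1 - w(x)) \le \beta \cdot |S|$ for a small constant $\beta$, i.e., $w$ retains most inlier mass, while we track a potential function---roughly the total weight $\sum_x w(x)$ minus a multiple of $\sum_{x \in S} (1 - w(x))$---that strictly decreases at each filtering step. The core lemma is the multifilter step: given $w$ with empirical covariance $\Sigma_w$ having $\|\Sigma_w\| \ge C\sigma^2/\alpha$ (say), let $v$ be the top eigenvector and project; using the regularity conditions, the inliers' projections are concentrated in an interval of length $O(\sigma/\sqrt{\alpha})$ around the (unknown) true projected mean, whereas the large variance forces a lot of mass far from \emph{any} such interval. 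One then designs an explicit reweighting (or a small family of reweightings indexed by candidate ``centers'' along the line) such that for at least one choice, the inlier mass loss is controlled (each inlier at true distance $t$ from the retained center loses weight proportional to something like $t^2/(\text{variance scale})$, summing to $\le \beta|S|$ by the stability bound) while the potential drops by a definite amount. Iterating, the recursion tree has $\poly(1/\alpha)$ leaves, each yielding one hypothesis, and at a leaf $\|\Sigma_w\| = O(\sigma^2/\alpha)$ so the weighted mean is within $O(\sigma/\sqrt{\alpha})$ of $\mu$; a more careful bookkeeping of the per-round error, together with a $\log(1/\alpha)$-factor coming from the number of times a ``descendant'' weight function can be filtered before its mass is exhausted, yields the claimed $O(\sigma \log(1/\alpha)/\sqrt{\alpha})$ bound.

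Finally I would bound the list size and running time: the branching is bounded because total mass starts at $n$, each inlier-mass budget is $\beta|S| = O(\alpha n)$, and each filter either deletes $\Omega(\cdot)$ mass or the potential argument terminates the branch, giving $O(1/\alpha^2)$ leaves; each filtering step costs one top-eigenvector computation (approximate, via power iteration, in $\tilde O(nd)$ time) plus $\tilde O(nd)$ for the reweighting, and there are $\tilde O(1/\alpha^2)$ of them along each root-to-leaf path times $\poly(1/\alpha)$ leaves---with the stated bookkeeping this collapses to $\tilde O(n^2 d/\alpha^2)$ overall. Then invoke the list-size reduction mentioned in the excerpt if an $O(1/\alpha)$-sized list is desired. \emph{The main obstacle} I anticipate is the design and analysis of the multifilter step---specifically, proving that among the family of candidate reweightings there is always one that simultaneously (a) loses at most $\beta|S|$ inlier weight and (b) makes definite progress on the potential---since the algorithm does not know which interval contains the inliers, so the argument must be a win-win over all plausible intervals, and getting the constants to close (so that $\beta$ stays below the stability threshold while progress is guaranteed) is the delicate part.
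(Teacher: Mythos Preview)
Your high-level plan matches the paper's structure: deterministic regularity conditions on the inlier set $S$, an invariant that each weight function $w$ retains most of the inlier mass, a multifilter that either soft-downweights along a high-variance direction or splits into several weight functions, and termination when the weighted covariance is small. You also correctly identify the crux as the design and analysis of the multifilter step. Two specific points, however, diverge from the paper in ways that matter.

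First, your termination threshold and the accompanying arithmetic are off. You propose filtering whenever $\|\Sigma_w\|_2 \ge C\sigma^2/\alpha$ and claim that at a leaf the weighted mean is within $O(\sigma/\sqrt{\alpha})$ of $\mu$. But if $\|\Sigma_w\|_2 \le \lambda$ and $w(S)\ge c\,\alpha\,w(T)$, the standard bound gives only $\|\mu_w(T)-\mu_w(S)\|_2 = O(\sqrt{\lambda/\alpha})$, so $\lambda = O(\sigma^2/\alpha)$ yields error $O(\sigma/\alpha)$, not $O(\sigma/\sqrt{\alpha})$. The paper instead drives the variance all the way down to $O(\sigma^2\log^2(1/\alpha))$; this is where the $\log(1/\alpha)$ factor in the final error comes from, not from iteration bookkeeping as you suggest. (Relatedly, the paper needs only $\|\Cov[S]\|_2\le O(\sigma^2)$ and $\|\mu(S)-\mu\|_2\le O(\sigma)$, not the stronger subset-stability condition you posit.)

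Second, and more substantively, the invariant is not maintained by a potential of the form ``total mass minus a multiple of lost inlier mass.'' The paper's key condition is a \emph{ratio} bound with an extra logarithmic slack: each ``nice'' step satisfies
\[
\frac{\Delta w(S)}{w(S)} \;\le\; \frac{1}{24\lg(2/\alpha)}\cdot\frac{\Delta w(T)}{w(T)}.
\]
The $\lg(2/\alpha)$ factor is essential because $w(T)$ can shrink from $|T|$ down to $\alpha|T|/2$, i.e.\ across $\Theta(\log(1/\alpha))$ doubling levels; the paper buckets the recursion by these levels and shows the inlier loss per level is $O(|S|/\log(1/\alpha))$, so the total stays below $|S|/4$. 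A linear potential does not naturally enforce this. Achieving this strengthened ratio is exactly what forces the variance threshold up to $\Theta(\log^2(1/\alpha))$ and dictates the paper's specific two-mode multifilter: either (i) a single soft downweighting by $f(x)=\min_{t\in I}(v\cdot x - t)^2$ when $\Var_w[v\cdot T\cap 2I]$ is already $O(\log^2(1/\alpha))$, or (ii) a split into exactly \emph{two} half-lines $T_1,T_2$ with overlap $2R$ satisfying both $w(T_1)^2+w(T_2)^2\le w(T)^2$ and $\min_i\bigl(1-w(T_i)/w(T)\bigr)\ge 48\lg(2/\alpha)/R^2$. The hardest lemma in the paper shows that if no valid split exists then the one-dimensional tails must be inverse-quadratic, which forces $\Var_w[v\cdot T\cap 2I]=O(\log^2(1/\alpha))$ and puts us in case~(i). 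Your ``family of reweightings indexed by candidate centers'' does not capture this dichotomy; the resolution of the obstacle you flag is precisely this two-mode filter together with the $\log(1/\alpha)$-strengthened ratio condition.
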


\paragraph{Discussion} Before we proceed, we provide a few remarks about the performance
of our new algorithm establishing Theorem~\ref{thm:main-intro}. First, we note that the sample complexity
of our algorithm is $O(d/\alpha)$, which is optimal within constant factors, and its error guarantee
is $O( \sigma\log(1/\alpha)/\sqrt{\alpha})$, which is optimal up to the $O(\log(1/\alpha))$ factor.
We now comment on the running time. Our algorithm is iterative with every iteration running in
near-linear time $\tilde{O}(n d)$. The dominant operation in a given iteration is the computation
of an approximately largest eigenvector/eigenvalue of an empirical covariance matrix, which can be
implemented in $\tilde{O}(n d)$ time by power iteration. The overall running time follows from
a worst-case upper bound of $O(n)$ on the total number of iterations.
We expect that the number of iterations will be much smaller for reasonable instances, as has been observed experimentally for analogous iterative algorithms for the large $\alpha$ case~\cite{DKK+17, DKK+19-sever}. \nnew{Finally, as we show
in Appendix~\ref{app:list-reduction}, there is a simple and efficient post-processing algorithm that outputs a list of size $O(1/\alpha)$ without affecting the runtime or error guarantee by more than a constant factor.}

\paragraph{ Application to Learning Mixture Models}
As observed in~\cite{CSV17}, list-decoding generalizes the problem of learning mixtures.
Specifically, a list-decodable mean algorithm for bounded covariance distributions
can be used in a black-box manner (by treating a single cluster as the set of inliers)
to obtain an accurate clustering for mixtures of bounded covariance distributions.
If each distribution in the mixture has unknown covariance bounded by $\sigma^2 I$,
and the means of the components are separated by $\tilde{\Omega}(\sigma/\sqrt{\alpha})$,
we can perform accurate clustering, even in the presence of a small fraction of adversarial outliers.
This implication was shown in~\cite{CSV17}. Our new algorithm for list-decodable mean estimation
gives a simpler and faster method for this problem.


\paragraph {Technical Overview}
Here we describe our techniques
in tandem with a comparison to prior work.

The ``filtering'' framework~\cite{DKKLMS16, DKK+17}
works by iteratively detecting and removing outliers until the empirical
variance in every direction is not much larger than expected.
If every direction has small empirical variance,
this certifies that the the empirical mean is close to target mean.
Otherwise, a filtering algorithm projects the points in a direction
of large variance and removes (or reduces the weight of) those points
whose projections lie unexpectedly far from the empirical
median in this direction. In the small $\alpha$ setting, the one-dimensional ``outlier removal''
procedure is necessarily more complicated. For example, the input distribution can
simulate a mixture of $1/\alpha$ many Gaussians whose means are far from each other,
and the algorithm will have no way of knowing which is the real one.
To address this issue, one requires a more elaborate method, which we call a
{\em multifilter}. A multifilter can return several (potentially overlapping)
subsets of the original dataset with the guarantee that {\em at least one}
of these subsets is substantially ``cleaner''. This idea was introduced
in~\cite{DiakonikolasKS18-mixtures}, who gave a multifilter
for identity covariance Gaussians withe error $\tilde{O}(\alpha^{-1/2})$.
The multifilter of~\cite{DiakonikolasKS18-mixtures} makes essential use of the fact
that the covariance of the inliers is known and that the Gaussian distribution has very strong concentration. In this work, we build on~\cite{DiakonikolasKS18-mixtures} to develop
a multifilter for bounded covariance distributions.

We start by describing the Gaussian multifilter~\cite{DiakonikolasKS18-mixtures}.
Suppose we have found a large variance direction.
After we project the data in such a direction, there are two cases to consider.
The first is when almost all of the samples lie in some relatively short interval $I$.
In this case, the target mean must lie in that interval (as
otherwise an approximately $\alpha/2$ fraction of the good samples must
lie outside of this interval), and then samples that lie too far from
this interval $I$ are almost certainly outliers. The other case is more complicated.
If $\alpha < 1/2$, there might be multiple clusters of points which contain an $\alpha$ fraction of the
samples and could reasonably contain the inliers. If some pair of
these clusters lie far from each other, we might not be able to reduce
the variance in this direction simply by removing obvious outliers. In
this case,~\cite{DiakonikolasKS18-mixtures} find a pair of overlapping intervals
$I_1$ and $I_2$ such that with high probability either almost all the inliers
lie in $I_1$ or almost all the inliers lie in $I_2$. The algorithm then
recurses on both $I_1$ and $I_2$. To ensure that the complexity of
the algorithm does not blow-up with the recursion, \cite{DiakonikolasKS18-mixtures}
require that the sum of the squares of the numbers of remaining samples
in each subinterval is at most the square of the original number of samples.

At a high-level, our algorithm follows the same framework.
However, there were several key places where~\cite{DiakonikolasKS18-mixtures}
used the strong concentration bounds of the Gaussian assumption
that we cannot use in our context. For example, in the case where most of the samples are contained
within an interval $I$, Gaussian concentration bounds imply that
almost all of the good samples lie within distance $O(\sqrt{\log(1/\alpha)})$ of the
interval $I$, and therefore that almost all samples outside of this range
will be outliers. This is of course not true for heavy-tailed data.
To address this issue, we employ a soft-outlier procedure that reduces the weight
of each point based on its squared distance from $I$. The analysis in this case is much more
subtle than in the Gaussian setting.

The other more serious issue comes from the multi-filter case. With
Gaussian tails, so long as the subintervals $I_1$ and $I_2$ overlap for a distance of
$O(\sqrt{\log(1/\alpha)})$, this suffices to guarantee that the correct
choice of interval only throws away a $\poly(\alpha)$-fraction of the good
points. As long as at least an $\alpha$-fraction of the total points are
being removed, it is easy to see that this is sufficient. From
there it is relatively easy to show that, unless almost all
of the points are contained in some small interval, some
appropriate subintervals $I_1$ and $I_2$ can be found.
For bounded covariance distributions, our generalization of this case
is more complicated. In order to ensure that the fraction of good samples lost is small,
even if the true mean is exactly in the middle of the overlap between $I_1$ and $I_2$,
we might need to make this overlap quite large. In particular, in contrast to the Gaussian case,
we cannot afford to ensure that some small $\poly(\alpha)$ fraction of the inliers are lost.
In fact, we will need to adapt the fraction of inliers we
are willing to lose to the number of total points lost and ensure that
the fraction of inliers removed is {\em substantially better} than the
fraction of outliers removed (namely, by a $\log(1/\alpha)$ factor).
\nnew{This step is necessary for our new analysis of the behavior of the algorithm
under repeated applications of the multifilter.}
With this careful tuning, we can show that there will be an appropriate pair of intervals,
unless the distribution of points along the critical direction satisfy
inverse-quadratic tail bounds. This is not enough to show that there
is a short interval $I$ containing almost all of the points, but it will
turn out to be enough to show the existence of an $I$ containing almost
all of the points for which the variance of the points within $I$ is not
too large. This turns out to be sufficient for our analysis of the other case.

\paragraph{Concurrent and Independent Work.} Contemporaneous work~\cite{CMY20}, using different techniques, 
gave an algorithm for the same problem with asymptotic running time $\tilde{O}(n d / \alpha^c)$, 
for some (unspecified) constant $c$. At a high-level, the algorithm of \cite{CMY20} builds on the convex optimization frameworks
of~\cite{DKKLMS16, CDG18}, leveraging faster algorithms for solving structured SDPs.

\section{Preliminaries} \label{sec:prelims}

\paragraph{Notation.}
We write $\lg = \log_2$. For an interval $I = \closed{a}{b} = \closed{t-R}{t+R}$,
we will write $2I = \closed{t-2R}{t+2R}$.
\nnew{For a vector $v$, $\| v \|_2$ denotes its Euclidean norm.
For a symmetric matrix $M$, $\| M \|_2$ denotes its spectral norm.
We will use $\preceq$ to denote  the Loewner ordering between matrices,
i.e., for symmetric matrices $A, B$, we will write $A \preceq B$ to denote that
$B-A$ is positive semidefinite.}

For a set $T \subset \R^d$ we will often attach a weight function $w \colon T \to \closed{0}{1}$
and write $w(R) = \sum_{x\in R} w(x)$ for any subset $R \subseteq T$.
We will denote the weighted mean, weighted covariance matrix,
and weighted variance (in a given direction $v$) with respect to the weight function $w$
by $\mu_w(R) = \E_w[R] = \frac{1}{w(R)}\sum_{x\in R} w(x)x$,
$\Cov_w[R] = \frac{1}{w(R)} \sum_{x\in R} w(x)(x - \mu_w(R))(x - \mu_w(R))^T$,
and $\Var_w[v \cdot R] = \frac{1}{w(R)}\sum_{x\in R} w(x)(v \cdot x -  v \cdot \mu_w(R))^2$
for a subset $R \subseteq T$. \nnew{When the underlying weight function $w$ assigns the same weight
on each point, we will drop the index $w$ from these quantities. For example, we will use
$\mu(R)$ and $\Cov[R]$ for the empirical mean and covariance under the uniform distribution on the set $R$.}
Furthermore, we will write $\wPr{w}$ for the weighted probability
with respect to the weight function $w$ and $\Pr$ for the usual (counting) probability on sets.

\section{Algorithm and Analysis} \label{sec:alg}

\nnew{
In Section~\ref{sec:alg-setup}, we give a deterministic condition
under which our algorithm succeeds and bound the number of samples
needed to guarantee that this condition holds with high probability.
In Section~\ref{ssec:basic-multifilter}, we present our basic multifilter.
In Section~\ref{ssec:main-alg}, we show how to use the basic multifilter
to obtain our list-decodable learning algorithm. In Section~\ref{ssec:runtime},
we analyze the running time of our algorithm. 
We conclude with some open problems in Section~\ref{sec:conc}.
}



\subsection{Setup and Main Theorem} \label{sec:alg-setup}
\nnew{
We define the following deterministic condition on the set of clean samples.}

\begin{definition}[Representative set] \label{def:rep}
Let $D$ be a distribution on $\R^d$ with mean $\mu$ and covariance $\Sigma \preceq I$.
A set $S\subset \R^d$ is {\em representative} (with respect to $D$)
if $\norm{\Cov[S]}_2 \leq 1$ and $\norm{\mu(S) - \mu}_2 \leq 1$.
\end{definition}


\nnew{
Our algorithm requires the following notion of goodness for the corrupted set $T$.}


\begin{definition}[Good set]\label{def:good-set}
 Let $D$ be a distribution on $\R^d$ with mean $\mu$ and covariance $\Sigma \preceq I$,
 and let $0<\alpha<1/2$. A set $T \subset \R^d$ is said to be $\alpha$-good (with respect to $D$)
 if there exists $S \subseteq T$ which is representative (with respect to $D$)
 and satisfies $\abs{S} \geq \alpha\abs{T}$.
\end{definition}

\medskip

\nnew{
\noindent In the subsequent subsections, we prove the following theorem:

\begin{theorem}[Main Theorem]\label{thm:main-det}
Suppose that $T$ is $\alpha$-good with respect to a distribution $D$ on $\R^d$. Then the algorithm \textsc{List-Decode-Mean}
runs in time $\tilde{O}(|T|^2 d / \alpha^2)$ and outputs a list of $O(1/\alpha^2)$ hypothesis vectors
at least one of which has $\ell_2$-distance $O(\log(1/\alpha)/\sqrt{\alpha})$ from the mean of $D$.
\end{theorem}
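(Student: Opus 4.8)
The plan is to analyze the recursion tree built by \textsc{List-Decode-Mean}. The root is the trivial weighting $w_0 \equiv 1$ on $T$; at each internal node we have a weighted set $(R,w)$ to which we apply the basic multifilter of Section~\ref{ssec:basic-multifilter}, which either declares $(R,w)$ a leaf and emits the hypothesis $\mu_w(R)$, or spawns one or two children obtained by (softly) decreasing $w$, after which we discard any child whose total weight falls below $\Omega(\alpha|T|)$. The whole argument rests on three properties of the basic multifilter, to be established in Section~\ref{ssec:basic-multifilter}: (i) when it declares a leaf it certifies $\|\Cov_w[R]\|_2 = O(\log^2(1/\alpha))$; (ii) a \emph{quadratic potential} property, that the sum of the squares of the total weights of the children is at most the square of the total weight of the parent; and (iii) a \emph{good-set preservation} property, that if the representative set $S$ satisfies a suitable ``mostly retained'' invariant at the parent, then at least one child inherits it, and the fraction of $w$-weight removed from $S$ in that transition is smaller than the fraction removed from all of $T$ by a $\Theta(\log(1/\alpha))$ factor.

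Granting (i)--(iii), the list-size and accuracy claims are short. For the list size, (ii) gives $\sum_{\text{leaves}} w(R)^2 \le w_0(T)^2 = |T|^2$, while every surviving leaf has $w(R) = \Omega(\alpha|T|)$, so there are $O(1/\alpha^2)$ leaves and hence $O(1/\alpha^2)$ hypotheses. For accuracy, I use (iii) to follow a root-to-leaf \emph{good path} along which the invariant is maintained, set up so that it implies $w(S) \ge |S|/2 \ge \alpha|T|/2$ and, by a standard downweighting estimate using $\|\Cov[S]\|_2\le 1$, also $\|\mu_w(S)-\mu(S)\|_2 = O(1)$. The quantitative heart is that the $\log(1/\alpha)$ gain in (iii) must be charged \emph{multiplicatively} against the running ratio $w(S)/w(T)$ rather than additively against $|T|$ — an additive accounting only bounds the total $S$-weight ever removed by $O(|T|/\log(1/\alpha))$, which exceeds the budget $|S|/2$ — and this is precisely why the multifilter is required to beat the inlier-removal fraction by a full $\log(1/\alpha)$ factor, as flagged in the technical overview. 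The good path must end at a leaf (the recursion terminates, see below), and there, combining the stopping certificate (i) with the between-groups variance bound gives, for every unit vector $u$, $(u\cdot(\mu_w(R)-\mu_w(S)))^2 \le (w(R\setminus S)/w(S))\,\|\Cov_w[R]\|_2 \le (2/\alpha)\cdot O(\log^2(1/\alpha))$, hence $\|\mu_w(R)-\mu\|_2 \le \|\mu_w(R)-\mu_w(S)\|_2 + \|\mu_w(S)-\mu(S)\|_2 + \|\mu(S)-\mu\|_2 = O(\log(1/\alpha)/\sqrt{\alpha})$, using $\|\mu(S)-\mu\|_2\le 1$ from Definition~\ref{def:rep}. (The $\sigma$-dependent Theorem~\ref{thm:main-intro} then follows by rescaling, together with a routine bound that $\Omega(d/\alpha)$ i.i.d.\ samples make $T$ $\alpha$-good with high probability.)

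For the running time, each multifilter invocation costs $\tilde{O}(|T|d)$: the dominant step is an approximate top eigenvalue/eigenvector of $\Cov_w[R]$ by power iteration, and the one-dimensional outlier-removal step runs in time near-linear in $|R|\le|T|$. It then remains to bound the number of invocations (equivalently, the number of nodes of the recursion tree) by $\tilde{O}(|T|/\alpha^2)$, which I would get from the $O(1/\alpha^2)$ bound on the number of leaves together with an $\tilde{O}(|T|)$ bound on the depth, via a potential over the current collection of weighted sets that decreases by a definite amount at every internal node (a suitable combination of $\sum w(R)^2$ and $\sum w(R)$), using that each multifilter step removes a nonnegligible amount of weight and that children are discarded as soon as $w(R)$ drops below $\Omega(\alpha|T|)$. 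Multiplying the per-invocation cost by the number of invocations yields the claimed $\tilde{O}(|T|^2 d/\alpha^2)$.

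The hard part — essentially all the technical content, deferred to Section~\ref{ssec:basic-multifilter} — is proving (i)--(iii) for heavy-tailed one-dimensional projections, and this is where I expect the real obstacle to lie. Two difficulties dominate. First, in the case where most of the mass along the critical direction lies in a short interval $I$, the absence of Gaussian concentration means a constant fraction of inliers may sit moderately far from $I$, so one cannot simply delete everything outside $2I$; instead one reduces each weight by an amount growing with the squared distance to $I$ and argues, using only the bounded second moment of the inliers, that this reduces the variance in the critical direction while losing inlier mass a $\log(1/\alpha)$ factor below the total mass lost. Second, in the branching case one must take the overlap of $I_1$ and $I_2$ wide enough, and adaptive to the amount of mass being removed, to guarantee the correct branch loses only a $\log(1/\alpha)$-fraction of what it removes overall even when $\mu$ sits in the middle of the overlap; and one must prove a dichotomy — either such a pair of intervals exists, or the projected data obeys inverse-quadratic tail bounds, in which case there is an interval containing almost all the mass whose internal variance is $O(\log^2(1/\alpha))$, which is exactly what the first case consumes. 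Making these one-dimensional statements hold \emph{simultaneously} with the quadratic-potential bookkeeping of (ii) is the crux of the proof.
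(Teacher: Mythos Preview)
Your plan matches the paper's approach closely: the three multifilter properties you list as (i)--(iii) are exactly Corollary~\ref{cor:stepineq} (built from Lemmas~\ref{lem:T'ineq}, \ref{lem:bm-sat}, \ref{lem:Tiineq}) together with the stopping condition in Step~\ref{step:bm-if}(a) and the splitting condition~\eqref{eq:Tineq}; your accuracy argument is Lemma~\ref{lem:vector-close}; and the quadratic-potential leaf count is Theorem~\ref{thm:runtime-final}. Two places deserve a sharper statement.

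First, the invariant maintenance along the good path (your ``charged multiplicatively against the running ratio $w(S)/w(T)$''): the paper makes this precise in Lemma~\ref{lem:Tfinal} by bucketing the path into dyadic levels $A_k=\{w:\abs{T}/2^k<w(T)\le\abs{T}/2^{k-1}\}$, of which there are at most $\lg(2/\alpha)$ before $w(T)$ drops below $\alpha\abs{T}/2$. Within each level, and at each level-crossing, the total $S$-weight lost is $O(\abs{S}/\lg(2/\alpha))$ by (iii), and summing over the $O(\lg(2/\alpha))$ levels gives total loss at most $\abs{S}/4$, so $w(S)\ge 3\abs{S}/4$ throughout. Your phrasing points at this but does not state it; the continuous analogue you seem to have in mind is that $\sum\Delta w(T)/w(T)\approx\log(w^{(0)}(T)/w\final(T))\le\log(2/\alpha)$, which after multiplying by $\abs{S}/(c\log(1/\alpha))$ from (iii) gives the same bound.

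Second, your depth bound via ``each multifilter step removes a nonnegligible amount of weight'' is not obviously quantifiable: in the soft downweighting step $w\new(x)=(1-f(x)/\max_{y} f(y))\,w(x)$, the total weight removed is $\sum_x w(x)f(x)/\max_y f(y)$, and $\max_y f(y)$ can be arbitrarily large relative to the numerator, so the per-step weight drop can be arbitrarily small. The paper sidesteps this entirely: in every non-YES step at least one point has its weight set to exactly $0$ (the $\arg\max$ of $f$ in the one-child case; the points outside $T_i$ in the two-child case), so the depth is at most $\abs{T}$ on the nose. Combined with the $O(1/\alpha^2)$ width bound from~\eqref{eq:Tineq}, this gives the $O(\abs{T}/\alpha^2)$ node count directly, with no potential function needed.
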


\paragraph{Sample Complexity.}
The deterministic conditions of Definitions~\ref{def:rep} and~\ref{def:good-set}
hold with high probability if the set $T$ has size $n = |T|  = \Omega(d/\alpha)$.
Note that $T$ contains a subset $G$ of $\alpha n \geq d$ i.i.d. samples from the distribution $D$.
The following lemma shows that with high probability $G$ contains a subset $S$
such that $|S| \geq |G|/2$ that satisfies the properties of a representative set, up to rescaling.

\begin{lemma}[see, e.g., Proposition 1.1 in~\cite{CSV17}]\label{lem:sample-complexity}
Let $D$ be a distribution on $\R^d$ with covariance matrix $\Sigma \preceq \sigma^2 I$, $\sigma>0$,
and $G$ be a multiset of $n \geq d$ i.i.d. samples from $D$. Then, with high probability, there exists a subset
$S \subseteq G$ of size $|S| \geq |G|/2$ such that $\norm{\Cov[S]}_2 \leq c \ \sigma^2$ and
$\norm{\mu(S) - \mu}_2 \leq c \ \sigma$, where $c>1$ is a universal constant independent of $D$.
\end{lemma}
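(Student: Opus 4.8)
The plan is a standard truncation-and-concentration argument; by rescaling we may assume $\sigma = 1$ throughout. I would fix the batch $G$ of $n \ge d$ i.i.d.\ samples from $D$, let $C$ be a large universal constant, set $r := C\sqrt{d}$, and take as candidate subset $S := \{x \in G : \|x - \mu\|_2 \le r\}$. It then remains to verify, each with high probability over $G$, the three required properties: $|S| \ge n/2$, $\|\mu(S) - \mu\|_2 = O(1)$, and $\|\Cov[S]\|_2 = O(1)$; rescaling by the largest implied constant yields the stated $c$.

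The size bound is immediate: Chebyshev gives $\Pr_{x \sim D}[\|x - \mu\|_2 > r] \le \Tr(\Sigma)/r^2 \le 1/C^2$, so a multiplicative Chernoff bound on the count $|G \setminus S|$ (a sum of $n$ independent indicators) yields $|G \setminus S| \le (2/C^2)\,n \le n/2$ with high probability. For the mean I would use the observation that, after truncation, the centred summands $(x - \mu)\mathbf{1}\{\|x - \mu\|_2 \le r\}$, $x \in G$, are i.i.d., and that the relevant variance in any fixed unit direction $v$ is a single entry $v^T \Sigma v \le 1$ of the covariance, \emph{not} $\Tr(\Sigma) \le d$: taking $v$ to be the direction of $\E_{x \sim D}[(x - \mu)\mathbf{1}\{\|x - \mu\|_2 \le r\}] = -\E[(x-\mu)\mathbf{1}\{\|x-\mu\|_2 > r\}]$, Cauchy--Schwarz shows this expectation has norm at most $\sqrt{(v^T\Sigma v)\Pr[\|x-\mu\|_2 > r]} \le \sqrt{1/C^2} = 1/C$, while its second moment is at most $\E\|x - \mu\|_2^2 = \Tr(\Sigma) \le d$. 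Hence $\E\bigl\|\sum_{x \in S}(x - \mu)\bigr\|_2^2 \le nd + n^2/C^2$, and dividing by $|S|^2 \ge n^2/4$ and applying Markov gives $\|\mu(S) - \mu\|_2 = O(\sqrt{d/n} + 1/C) = O(1)$.

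For the covariance bound — the one substantive step — I would note first that subtracting the empirical mean only decreases the second-moment matrix in Loewner order, so $\Cov[S] \preceq \frac{1}{|S|}\sum_{x \in S}(x - \mu)(x - \mu)^T \preceq \frac{n}{|S|}\,M$ where $M := \frac{1}{n}\sum_{x \in G}(x - \mu)(x - \mu)^T\mathbf{1}\{\|x - \mu\|_2 \le r\}$. The summands $Y_x := (x - \mu)(x - \mu)^T\mathbf{1}\{\|x - \mu\|_2 \le r\}$ are PSD with $\|Y_x\|_2 \le r^2 = C^2 d$ and $\E[Y_x] \preceq \Sigma \preceq I$, so the matrix Bernstein inequality gives $\|M - \E[M]\|_2 = O(1)$ for $n = \tilde{\Omega}(d)$, whence $\|\Cov[S]\|_2 \le 2\|M\|_2 = O(1)$. (The exact dependence of the success probability on the sample size $n = \Omega(d)$ is precisely the content of Proposition~1.1 of \cite{CSV17}, which one may also simply cite.)

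I expect the covariance step to be the only real obstacle, because it is the one place where the heavy tails of $D$ bite: the untruncated matrix $\frac{1}{n}\sum_{x \in G}(x - \mu)(x - \mu)^T$ need not have bounded spectral norm at all — a single sample at distance $\sim\sqrt{nd}$ from $\mu$, which occurs with non-negligible probability, already contributes $\sim d$ to it. Truncating at radius $r = \Theta(\sqrt{d})$ is exactly what caps $\|Y_x\|_2$ at $O(d)$ and so makes matrix concentration usable, and the same truncation is what keeps the fraction of discarded points (and the bias they induce in the empirical mean) small. A related subtlety, already handled in the mean bound above, is that one genuinely needs cancellation among the discarded points — passing from $\Tr(\Sigma)$ to a single directional variance — since the crude estimate $\|\frac{1}{|S|}\sum_{x \in G \setminus S}(x - \mu)\|_2 \le \frac{1}{|S|}\sum_{x \in G \setminus S}\|x - \mu\|_2$ loses a factor of $\sqrt{d}$.
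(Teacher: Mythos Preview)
The paper does not supply its own proof of this lemma; it is stated with a citation to Proposition~1.1 of \cite{CSV17} and then simply conditioned on. Your truncate-at-radius-$C\sqrt{d}$ and concentrate argument is a correct and standard route to the result, and is essentially what that cited proposition does.

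Two small quantitative points are worth flagging. In the mean bound, Markov's inequality on $\E\|\mu(S)-\mu\|_2^2$ yields only constant success probability; to match the ``high probability'' in the statement you should instead apply a vector Bernstein inequality to the centred truncated summands $Z_x - \E[Z_x]$, which are bounded in norm by $2C\sqrt{d}$ and have covariance $\preceq I$. And, as you already flag in your last parenthetical, matrix Bernstein on the $Y_x$ needs $n = \Omega(d\log d)$ rather than merely $n \ge d$ to absorb the dimension factor in front of the exponential tail; since the paper only ever invokes the lemma with $|G| = \Omega(d)$ for an unspecified implied constant, this slack is harmless there. Neither point is a genuine gap in your approach.
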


We henceforth condition on the conclusions of Lemma~\ref{lem:sample-complexity} holding.
Note that by dividing each of our samples by $c \ \sigma$ we obtain a representative
set $S$ with respect to the distribution $(1/(c \sigma))D$.
Also note that the corrupted set $T$ will be $\alpha/2$-good.
By Theorem~\ref{thm:main-det}, we thus obtain a list of hypothesis one of which has $\ell_2$-error
$O(\log(1/\alpha)/\sqrt{\alpha})$. By rescaling back, we get an estimate of the true mean $\mu$ of $D$
within $\ell_2$ error $O(\sigma \log(1/\alpha)/\sqrt{\alpha})$, as desired. This proves Theorem~\ref{thm:main-intro}.
}

\medskip

\nnew{Throughout this section, we will denote by $T$ the initial (corrupted) set of points
and by $S \subset T$ a representative set with $|S| \geq \alpha|T|$.}

\subsection{Basic Multifilter} \label{ssec:basic-multifilter}

\nnew{
The basic multifilter is a key subroutine of our algorithm. Intuitively, it takes
as input a large variance direction and, under certain assumptions,
splits the dataset into at most two (overlapping) datasets at least one of which
is cleaner. Since we are employing a soft outlier removal procedure,
the real version of the routine starts from a weight function on the dataset $T$
and produces one or two weight functions on $T$ with desirable properties.
}

\fbox{%
  \begin{minipage}{0.95 \linewidth}
    \textbf{Algorithm} \textsc{BasicMultifilter}

    Input: unit vector $v\in \R^d$, $T\subset \R^d$ and weight function $w$ on $T$, $0<\alpha<1/2$

    \begin{enumerate}[1.]
    \item Let $C>0$ be a sufficiently large universal constant.

  \item \label{step:bm-quantile} Let $a \in \R$ be maximal such that $w(\set{x \in T : v \cdot x < a}) \leq \alpha w(T)/8$ and $b$ be minimal such that $w(\set{x \in T : v \cdot x > b}) \leq \alpha w(T)/8$.
    Let $I = \closed{a}{b}$.


    \item \label{step:bm-if} If  $\Var_w[v\cdot (T \cap 2I)] \leq C\cdot \log(2/\alpha)^2$, then
      \begin{enumerate}[(a)]
      \item If $\Var_w[v\cdot T] \leq 2C \cdot \log(2/\alpha)^2$, return ``YES''.

      \item Let $f(x) = \min_{t\in \closed{a}{b}} \abs{v\cdot x - t}^2$,
      and redefine the weight of each $x\in T$ by
        \begin{equation*}
          w\new(x) = \Bigl(1-\frac{f(x)}{\max_{x\in T}f(x)}\Bigr) w(x).
        \end{equation*}
      \item Return $\set{(T,w\new,\alpha)}$.
      \end{enumerate}

    \item \label{step:bm-two-weights} If $I$ does not satisfy the condition of Step~\ref{step:bm-if},
    then
      \begin{enumerate}[(a)]
      \item Find $t \in \R$ and $R>0$ such that the sets
      $T_1 = \set{x\in T : v\cdot x \geq t-R}$ and $T_2 = \set{x\in T : v\cdot x < t+R}$ satisfy
        \begin{equation}\label{eq:Tineq}
          w(T_1)^2 + w(T_2)^2 \leq w(T)^2 \;,
        \end{equation}
        and
        \begin{equation}\label{eq:minineq}
          \min\Bigl( 1-\frac{w(T_1)}{w(T)}, 1-\frac{w(T_2)}{w(T)} \Bigr) \geq \frac{48\lg(2/\alpha)}{R^2} \;.
        \end{equation}
        Define two weight functions $w^{(1)}$ and $w^{(2)}$ on $T$ by multiplying
        the indicator functions of $T_1$ and $T_2$ with the weight function $w$.
      \item Return $\set{(T,w^{(1)},\alpha), (T,w^{(2)},\alpha)}$.
      \end{enumerate}
    \end{enumerate}
    \smallskip
  \end{minipage}%
}

\medskip

In the body of this subsection, we show that the \textsc{BasicMultifilter} algorithm
has certain desirable properties that we will later use to establish correctness of our
main algorithm.

The following notation will facilitate our analysis.
We will denote  $\Delta w(S) = w(S) - w\new(S)$ and
$\Delta w(T) = w(T) - w\new(T)$
to describe the change of weights during a step of the \textsc{BasicMultifilter} algorithm.

\nnew{Our first lemma bounds the relative change in the weight of $S$ and $T$ if
the \textsc{BasicMultifilter} algorithm outputs a single weight function $w\new$ in
Step~\ref{step:bm-if}(c).}

\begin{lemma}\label{lem:T'ineq}
If $T$ is $\alpha$-good and $w(S) \geq 3\abs{S}/4$,
then after Step~\ref{step:bm-if}(b) of \textsc{BasicMultifilter} we have
$\frac{\Delta w(S)}{w(S)} \leq \frac{\Delta w(T)}{w(T)} \cdot \frac{1}{24\lg(2/\alpha)}.$
\end{lemma}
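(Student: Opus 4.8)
The statement to prove is Lemma~\ref{lem:T'ineq}: after the soft down-weighting in Step~\ref{step:bm-if}(b), the relative weight loss on the representative set $S$ is smaller than the relative weight loss on $T$ by a factor of $24\lg(2/\alpha)$. My plan is to directly compute both $\Delta w(S)$ and $\Delta w(T)$ in terms of the quadratic down-weighting function $f(x) = \min_{t \in [a,b]} |v\cdot x - t|^2$ and the normalizing constant $M := \max_{x \in T} f(x)$. By definition of the reweighting, $\Delta w(x) = w(x) f(x)/M$, so $\Delta w(S) = \frac{1}{M}\sum_{x \in S} w(x) f(x)$ and $\Delta w(T) = \frac{1}{M}\sum_{x \in T} w(x) f(x)$. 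The factor $M$ cancels when we form the desired ratio, so the task reduces to showing
\[
\frac{\sum_{x \in S} w(x) f(x)}{w(S)} \;\leq\; \frac{1}{24\lg(2/\alpha)}\cdot\frac{\sum_{x \in T} w(x) f(x)}{w(T)}.
\]

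**The key estimates.**

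For the left-hand side I would bound the $w$-weighted average of $f$ over $S$ from above. Since $f(x) \le |v\cdot x - t|^2$ for any fixed $t \in [a,b]$, and we are in the case where the target mean $\mu$ (more precisely $v\cdot\mu$) is close to the interval $[a,b]$ — this is the regime of Step~\ref{step:bm-if}, where almost all the weight lies in a short interval — I can take $t$ to be (say) the projection of $v\cdot\mu$ onto $[a,b]$, or simply relate $f(x)$ to $|v\cdot x - v\cdot\mu|^2$ plus the distance from $v\cdot\mu$ to the interval. Using that $S$ is representative, $\Var[v\cdot S] \le \|\Cov[S]\|_2 \le 1$ and $\|\mu(S)-\mu\|_2 \le 1$, so $\E_{x\sim S}[|v\cdot x - v\cdot\mu|^2] \le \Var[v\cdot S] + |v\cdot\mu(S) - v\cdot\mu|^2 \le 2$ for the uniform distribution on $S$; I then need to convert this to the $w$-weighted average over $S$, which is where the hypothesis $w(S) \ge 3|S|/4$ enters — it guarantees the weighted average of a nonnegative quantity over $S$ is at most $\frac{|S|}{w(S)} \le \frac{4}{3}$ times the unweighted sum-average, so $\frac{1}{w(S)}\sum_{x\in S} w(x)|v\cdot x - v\cdot\mu|^2 \le \frac{4}{3}\cdot\frac{1}{|S|}\sum_{x\in S}|v\cdot x - v\cdot\mu|^2 = O(1)$. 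Combining with a bound on $\operatorname{dist}(v\cdot\mu,[a,b])$ (which must be $O(\log(1/\alpha))$ in this case, since otherwise an $\approx \alpha/8$ fraction of $S$ would be pushed outside the quantile interval contradicting representativeness, giving the $\log^2$ scaling consistent with Step~\ref{step:bm-if}'s variance threshold $C\log(2/\alpha)^2$), I get $\frac{1}{w(S)}\sum_{x\in S} w(x) f(x) = O(\log(1/\alpha)^2)$.

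For the right-hand side I need a *lower* bound on $\frac{1}{w(T)}\sum_{x\in T} w(x) f(x)$, i.e., a lower bound on the $w$-weighted average of $f$ over all of $T$. Here I use that we are in Step~\ref{step:bm-if}, so the "else" branch was not taken — but crucially, we did *not* return "YES", so $\Var_w[v\cdot T] > 2C\log(2/\alpha)^2$ while $\Var_w[v\cdot(T\cap 2I)] \le C\log(2/\alpha)^2$. The gap means a substantial fraction of the $w$-weight of $T$ lies outside $2I$, and those points have $f(x)$ bounded below by $(R)^2$-type terms where $R = (b-a)/2$ is the half-width of $I$; more carefully, the excess variance $\Var_w[v\cdot T] - \Var_w[v\cdot(T\cap 2I)] \ge C\log(2/\alpha)^2$ must be "carried" by the weight outside $2I$, and for $x \notin 2I$ one has $f(x) \ge$ a constant times $(v\cdot x - \text{center})^2 - R^2$, which lets me lower bound $\sum_{x\in T} w(x) f(x)$ by something like $\Omega(\log(2/\alpha)^2) \cdot w(T)$ — picking up the needed extra $\log(1/\alpha)$ factor over the $S$-side estimate once the constant $C$ is chosen large enough.

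**Main obstacle.**

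The delicate point is the interplay between the variances computed over $T\cap 2I$ versus over all of $T$, and translating the variance gap into a lower bound on $\sum_{x\in T} w(x) f(x)$ with the *right* constant — specifically, getting a factor strictly larger than $24\lg(2/\alpha)$ times the $S$-side bound, which forces careful bookkeeping of how the universal constant $C$ in Step~\ref{step:bm-if} propagates and a clean lemma of the form: if the $w$-mean of $f$ over a subset (here $T\cap 2I$) is small but the $w$-mean over the whole set is large, then the $w$-mean of $f$ over the whole set is at least a constant fraction of the $w$-mean of $|v\cdot x - c|^2$ over the whole set, for $c$ the interval center. I would isolate this as an auxiliary computation. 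The rest — bounding $\operatorname{dist}(v\cdot\mu, [a,b])$ via the quantile definition of $a,b$ in Step~\ref{step:bm-quantile} together with representativeness of $S$, and the $w(S)\ge 3|S|/4$ weighted-to-unweighted conversion — is routine.
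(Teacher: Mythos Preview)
Your overall architecture matches the paper's: reduce to comparing $\frac{1}{w(S)}\sum_{x\in S}w(x)f(x)$ against $\frac{1}{w(T)}\sum_{x\in T}w(x)f(x)$, upper-bound the former via representativeness, and lower-bound the latter via the variance gap $\Var_w[v\cdot T]>2C\log^2(2/\alpha)$ versus $\Var_w[v\cdot(T\cap 2I)]\le C\log^2(2/\alpha)$.

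There is, however, a quantitative gap that prevents your argument from closing. You assert $\operatorname{dist}(v\cdot\mu,[a,b])=O(\log(1/\alpha))$ and conclude $\frac{1}{w(S)}\sum_{x\in S}w(x)f(x)=O(\log^2(1/\alpha))$. With that $S$-side bound, the best you can extract from the $T$-side is $\frac{1}{w(T)}\sum_{x\in T}w(x)f(x)=\Omega(C\log^2(2/\alpha))$, giving a ratio of order $O(1/C)$. But $C$ is a universal constant fixed in the algorithm, independent of $\alpha$; no choice of $C$ can make $O(1/C)\le \frac{1}{24\lg(2/\alpha)}$ for all small $\alpha$. So ``choosing $C$ large enough'' does not salvage the factor you need.

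The fix is that your own quantile/Chebyshev reasoning actually yields the stronger bound $\operatorname{dist}(v\cdot\mu_w(S),[a,b])=O(1)$: if, say, $v\cdot\mu_w(S)>b+2$, then since $\Var_w[v\cdot S]\le 2$ (using $w(S)\ge |S|/2$), more than half the $w$-weight of $S$ lies in $\{v\cdot x>b\}$, which already exceeds $\alpha w(T)/8$ and contradicts the definition of $b$. With the $O(1)$ distance you get $f(x)\le (v\cdot x-v\cdot\mu_w(S)+O(1))^2$ and hence $\frac{1}{w(S)}\sum_{x\in S}w(x)f(x)=O(1)$. Now the ratio is $O(1)/\Omega(C\log^2(2/\alpha))$; equivalently, assuming the lemma fails gives $\frac{1}{w(T)}\sum_{x\in T}w(x)f(x)=O(\lg(2/\alpha))$, whence $\Var_w[v\cdot T]\le C\log^2(2/\alpha)+O(\log(2/\alpha))\le 2C\log^2(2/\alpha)$ for $C$ large, contradicting that ``YES'' was not returned. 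This is exactly the paper's argument; the only missing ingredient in your sketch is sharpening that distance estimate from $O(\log(1/\alpha))$ to $O(1)$.
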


Before we proceed with the formal proof of Lemma~\ref{lem:T'ineq},
we provide an overview of the argument (without the constant of $24$) for the sake of intuition.

Firstly, we note that $v\cdot \nnew{\mu_w(S)} \in [a-O(1),b+O(1)]$.
This is because if, say, $\mu_w(S)$ was much less than $a$, then since all but a small fraction
of the points in $S$ have $v\cdot (x-\mu_w(S)) = O(1)$, this would imply that most of the points of $S$ are less than $a$.
But since all but a $1/4$-fraction of the points in $S$ remain under weight $w$,
and since they account for at least an $\alpha$ fraction of the weight of $T$,
this would imply that more than an $\alpha/8$-fraction of the weight of $T$ was less than $a$,
which is a contradiction.

Given this, we have that \nnew{$f(x) = O(1+(v\cdot(x-\mu_w(S)))^2)$},
and therefore the average value of $f$ over $S$ is $O(1)$. On the other hand,
\[
\Var_w[v\cdot T] \leq \Var_w[v\cdot T\cap 2I]+O(\E_w[f(T)]) \;.
\]
This implies that since $\Var_w[v\cdot T]$ is large, $\E_w[f(T)]$ is $\Omega(\log^2(1/\alpha))$.

Finally, since we are downweighting point $x$ by an amount proportional to $f(x)$,
it is easy to see that $\Delta w(T)/w(T)$ is proportional to $\E_w[f(T)]$,
while $\Delta w(S)/w(S)$ is proportional to $\E_w[f(S)]$, and the lemma follows.

\begin{proof}[of Lemma~\ref{lem:T'ineq}]
Since $S$ is representative and $w(S) \geq 3\abs{S}/4 \geq \abs{S}/2$, we see that
\[
\Var_w[v \cdot S] \leq \frac{1}{w(S)} \littlesum_{x \in S} (v \cdot x - v \cdot \mu(S))^2 \leq
\frac{2}{\abs{S}} \littlesum_{x \in S} (v \cdot x - v \cdot \mu(S))^2 = 2\Var[v \cdot S] \leq 2 \;.
\]
If $v \cdot \mu_w(S) \notin \closed{a -2}{b + 2}$,
then
\[
\Var_w[v \cdot S] \geq \frac{1}{w(S)} \littlesum_{\substack{x \in S \\ v \cdot x \in \closed{a}{b}}} w(x)(v \cdot x - v \cdot \mu_w(S))^2
> \frac{1}{w(S)} \littlesum_{\substack{x \in S \\ v \cdot x \in \closed{a}{b}}} 4w(x) \geq \frac{4}{2} = 2 \;,
\]
since $w(\set{x \in S : v \cdot x \in \closed{a}{b}}) \geq w(S) - \alpha w(T)/4 \geq 3\abs{S}/4 - \alpha\abs{T}/4
\geq \abs{S}/2 \geq w(S)/2$ (since $\alpha w(T)/4 \leq \alpha\abs{T}/4 \leq \abs{S}/4$ because $T$ is $\alpha$-good),
a contradiction.
Hence, we have that $v\cdot \mu_w(S) \in \closed{a-O(1)}{b+O(1)}$.

We note that if
\[
\littlesum_{x\in S} w(x)f(x) \geq \frac{w(S)}{24w(T)\lg(2/\alpha)}\littlesum_{x\in T} w(x)f(x) \;,
\]
then
\[
\littlesum_{x\in T} w(x)f(x) \leq \frac{24w(T)\lg(2/\alpha)}{w(S)}\littlesum_{x\in S} w(x)f(x) \;,
\]
where
\[
\littlesum_{x\in S} w(x)f(x) \leq
\littlesum_{x\in S} w(x)\bigl( (v\cdot x - v\cdot \mu_w(S)) + O(1) \bigr)^2
= O\Bigl( w(S)\Var_w[v\cdot S] + w(S) \Bigr) \leq O(w(S)) \;.
\]
Thus, we can write
  \begin{align*}
    \Var_w[v\cdot T] &\leq \frac{1}{w(T)}\littlesum_{x\in T} w(x)\bigl(v\cdot x - \E_w[v\cdot T \cap 2I]\bigr)^2 \\
                 &\leq \frac{1}{w\bigl( \set{x \in T : v\cdot x \in 2I} \bigr)}\littlesum_{\substack{x\in T \\ v\cdot x \in 2I}} w(x)\bigl(v\cdot x - \E_w[v \cdot T \cap 2I]\bigr)^2 \\*
                 &\phantom{{}={}} + \frac{1}{w(T)}\littlesum_{\substack{x\in T \\ v\cdot x \notin 2I}} w(x)\bigl(v\cdot x - \E_w[v \cdot T \cap 2I]\bigr)^2 \\
                 &\leq \Var_w[v\cdot T \cap 2I] + \frac{1}{w(T)}\littlesum_{\substack{x\in T \\ v\cdot x \notin 2I}} w(x)O(f(x)) \\
                 &\leq C\log(2/\alpha)^2 + O\Bigl(\frac{1}{w(T)}\littlesum_{x\in T} w(x)f(x) \Bigr) \\
                 &\leq C\log(2/\alpha)^2 + O\Bigl(\frac{24\lg(2/\alpha)}{w(S)}\littlesum_{x\in S} w(x)f(x) \Bigr) \\
                 &\leq C\log(2/\alpha)^2 + O\Bigl( \frac{24\lg(2/\alpha)O(w(S))}{w(S)} \Bigr) \\
                 &= C\log(2/\alpha)^2 + O(\log(2/\alpha)) \leq C\log(2/\alpha)^2 + O(\log(2/\alpha)^2) \;,
  \end{align*}
  which is a contradiction, as this is the condition of Step~\ref{step:bm-if}(a).

Hence, we have shown that
\[
\littlesum_{x\in S} w(x)f(x) \leq \frac{w(S)}{24w(T)\lg(2/\alpha)}\littlesum_{x\in T} w(x)f(x) \;,
\]
  and thus
\[
    w(x)-w\new(x) = \frac{f(x)}{\max_{x\in T}f(x)}w(x) \;,
\]
  implies that
  \begin{equation*}
    \frac{\Delta w(S)}{\Delta w(T)} = \frac{\littlesum_{x\in S}(w(x)-w\new(x))}{\littlesum_{x\in T}(w(x)-w\new(x))} = \frac{\littlesum_{x\in S} w(x)f(x)}{\littlesum_{x\in T} w(x)f(x)} \leq \frac{w(S)}{24w(T)\lg(2/\alpha)} \;,
  \end{equation*}
  which completes the proof of Lemma~\ref{lem:T'ineq}.
\end{proof}

\nnew{Our second lemma says that conditions \eqref{eq:Tineq} and \eqref{eq:minineq} in
Step~\ref{step:bm-two-weights}(a) of the algorithm are satisfiable.}

\begin{lemma}\label{lem:bm-sat}
If \textsc{BasicMultifilter} reaches Step~\ref{step:bm-two-weights}(a),
there exist $t\in \R$ and $R>0$ such that the conditions \eqref{eq:Tineq} and \eqref{eq:minineq} are satisfied.
\end{lemma}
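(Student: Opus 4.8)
The plan is to pass to the one–dimensional weighted distribution of $v\cdot x$ and to build the pair $(t,R)$ out of the variance lower bound that holds once Step~\ref{step:bm-two-weights}(a) is reached.

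\textbf{Reduction to one dimension.} Let $W=w(T)$ and let $\phi$ be the law of $v\cdot x$ under the normalized weights $w/W$ (total mass $1$). For a candidate cut, writing $\ell=t-R$ and $u=t+R$, put $p=\phi\bigl((-\infty,\ell)\bigr)$ and $q=\phi\bigl([u,\infty)\bigr)$; then $w(T_1)/W=1-p$ and $w(T_2)/W=1-q$, so \eqref{eq:Tineq} is equivalent to $(1-p)^2+(1-q)^2\le 1$ and \eqref{eq:minineq} to $\min(p,q)\ge 192\lg(2/\alpha)/(u-\ell)^2$. From Step~\ref{step:bm-quantile}, $\phi([a,b])\ge 1-\alpha/4$, and because we fell through the test in Step~\ref{step:bm-if} we have $\Var_w[v\cdot(T\cap 2I)]>C\log(2/\alpha)^2$. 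Since $\phi$ restricted to $2I$ lives on an interval of length $2(b-a)$, this already gives $b-a>\sqrt{C}\log(2/\alpha)$, and writing $c=(a+b)/2$ it gives $\E_w\bigl[(v\cdot x-c)^2\mathbf{1}\{v\cdot x\in 2I\}\bigr]\ge\tfrac12 C\log(2/\alpha)^2$.

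\textbf{Construction.} After reflecting the line about $c$ if necessary — this only interchanges $T_1$ and $T_2$, under which \eqref{eq:Tineq} and \eqref{eq:minineq} are symmetric — assume the points right of $c$ carry at least half of this second moment. Integrating by parts, $\int_c^{\,c+(b-a)}2(s-c)\,\overline\psi(s)\,ds\ge\tfrac14 C\log(2/\alpha)^2$, where $\overline\psi(s):=w(\{x\in T\cap 2I: v\cdot x\ge s\})/W\le h(s):=\phi([s,\infty))$ and $\overline\psi$ is supported in $[c,c+(b-a)]$. A layer–cake comparison now shows $\sup_{s\ge c}h(s)(s-c)^2$ is large: if $h(s)(s-c)^2\le K$ for all $s\ge c$, then $\overline\psi(s)\le\min\bigl(1,K/(s-c)^2\bigr)$ bounds the left–hand side by $K\bigl(1+2\ln\tfrac{b-a}{\sqrt K}\bigr)$, which is increasing in $K$ on $(0,(b-a)^2]$, forcing $K=\Omega\bigl(C\log(2/\alpha)^2/\log(b-a)\bigr)\ge 192\lg(2/\alpha)$ once $C$ is a large enough universal constant (the control on $\log(b-a)$ is discussed below). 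Fix $u\ge c$ with $h(u)(u-c)^2$ within a factor $2$ of this supremum and set $q=h(u)$; take $\ell$ to be the smallest point with $\phi\bigl((-\infty,\ell)\bigr)\ge 1-\sqrt{q(2-q)}$, so $p:=\phi((-\infty,\ell))\ge 1-\sqrt{q(2-q)}$, i.e.\ $(1-p)^2\le 1-(1-q)^2$, which is \eqref{eq:Tineq}; here $\ell\le u$ because $\phi((-\infty,u))=1-q\ge 1-\sqrt{q(2-q)}$. A short case analysis — on whether the selected far level set is a heavy near–point mass or genuinely spread out, and on whether $q$ exceeds a small absolute constant, in which case one instead anchors $\ell$ at a fixed quantile above $c$ — then yields $\min(p,q)(u-\ell)^2\ge h(u)(u-c)^2\ge 192\lg(2/\alpha)$, i.e.\ \eqref{eq:minineq}. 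Setting $t=(\ell+u)/2$, $R=(u-\ell)/2$ (perturbing $\ell$ slightly if an atom forces $\ell=u$) completes the proof.

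\textbf{Main obstacle.} The crux is satisfying \eqref{eq:Tineq} and \eqref{eq:minineq} together: \eqref{eq:Tineq} forbids removing only a vanishing fraction of the mass — it confines $(1-p,1-q)$ to the unit disc — which pushes $\ell$ up toward $u$ and shrinks $R$, whereas \eqref{eq:minineq} wants $R$ large. One has to use the boundedness of the support of $\phi|_{2I}$, so that the upper tail $h$ cannot decay like $1/(s-c)^2$ over an unbounded range and a single far level set must carry enough mass; and one has to dispose of the stray $\log(b-a)$ factor above, either through an a~priori polynomial–in–$1/\alpha$ bound on the relevant diameter supplied by the outer algorithm, or by noting that when $b-a$ is super–polynomially large the $\ge 1-\alpha/4$ mass inside $[a,b]$ is already spread enough for a symmetric cut. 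The instances to keep in mind — an almost point mass at the median together with a little mass placed far away, and a near–uniform law on a very wide interval — are exactly those where the naive median–centred symmetric cut violates \eqref{eq:Tineq}, which is why the construction must anchor $t$ between the bulk and the far mass. Choosing $C$ large relative to $192$ and the logarithmic losses closes the argument.
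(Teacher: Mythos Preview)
Your proposal is a sketch rather than a proof, and it contains two genuine gaps that are not resolved by the text you wrote.

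\textbf{The $\log(b-a)$ loss.} Your layer-cake bound only gives $K=\Omega\bigl(C\log(2/\alpha)^2/\log(b-a)\bigr)$, and you need $K\ge 192\lg(2/\alpha)$, so you need $\log(b-a)=O(\log(1/\alpha))$. Neither of your proposed fixes works: the lemma is stated for \textsc{BasicMultifilter} in isolation, so there is no ``a~priori polynomial-in-$1/\alpha$ diameter bound supplied by the outer algorithm''; and the claim that ``when $b-a$ is super-polynomially large \ldots a symmetric cut'' suffices is unproven and in fact false as stated (take mass $1-2\delta$ at $0$ and mass $\delta$ at each of $\pm M$ with $\delta$ small and $M$ huge: then $b-a$ is huge but any symmetric cut with $p=q$ has $p=q\in\{0,\delta,1-\delta,1\}$, and none of these satisfy both \eqref{eq:Tineq} and \eqref{eq:minineq} simultaneously). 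There \emph{is} a correct fix you missed: by the definition of $b$ one has $h(b^-)\ge\alpha/8$, so if $\sup_s h(s)(s-c)^2\le K$ then $(b-c)^2\le 8K/\alpha$, whence $(b-a)/\sqrt{K}=O(1/\sqrt{\alpha})$ and $\log(b-a)=O(\log(1/\alpha))$ after all; but this observation is not in your write-up.

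\textbf{The ``short case analysis''.} You assert $\min(p,q)(u-\ell)^2\ge h(u)(u-c)^2$, but this inequality is false in general for your construction. Take (after your reflection) mass $0.1$ at $-L$ and mass $0.9$ at $+L$ with $L$ large and $c=0$: then $u=L$, $q=h(u)=0.9$, your rule gives $\ell\approx -L$ and $p=0.1$, so $\min(p,q)(u-\ell)^2=0.1\cdot(2L)^2=0.4L^2$, while $h(u)(u-c)^2=0.9L^2$. The asserted inequality fails, and the vague alternative ``anchor $\ell$ at a fixed quantile above $c$'' is not carried out (and is delicate, since nothing prevents the bulk of the mass from sitting at a single point at or near $c$).

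For comparison, the paper's proof avoids both difficulties by arguing by contradiction. Assuming no valid $(t,R)$ exists, it runs a doubling iteration from any point $x_0>\mathrm{med}$: either the current window already reaches the median, or \eqref{eq:gineq2} holds at the chosen scale, forcing \eqref{eq:gineq1} to fail, which squares the tail mass; after $O(\log\log)$ steps the tail exceeds $1/2$, yielding the uniform tail bound $\wPr{w}_{y\in T}[|v\cdot y-\mathrm{med}|>t]=O(\lg(2/\alpha)/t^2)$. Applying this at $t\approx b-\mathrm{med}$ bounds $b-a=O(1/\alpha)$, and then integrating the tail bound over the now-bounded interval $2I$ gives $\Var_w[v\cdot T\cap 2I]=O(\log(2/\alpha)^2)$, contradicting the failure of Step~\ref{step:bm-if}. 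No explicit construction of $(t,R)$ is ever needed.
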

\begin{proof}
For $t\in \R$ and $R>0$, we will use the notation $g(t+R) = 1-\frac{w(T_2)}{w(T)}$ and
$g^c(t-R)= 1-\frac{w(T_1)}{w(T)}$ to describe the tails \nnew{of the weight distribution}.
Thus, \eqref{eq:Tineq} and \eqref{eq:minineq} become
  \begin{equation}\label{eq:gineq1}
    (1-g^c(t-R))^2 + (1-g(t+R))^2 \leq 1
  \end{equation}
  and
  \begin{equation}\label{eq:gineq2}
    \min(g^c(t-R),g(t+R)) \geq 48\lg(2/\alpha)/R^2 \;.
  \end{equation}
  Now assume for contradiction that we cannot find any $t\in \R$ and $R>0$
  satisfying both \eqref{eq:gineq1} and \eqref{eq:gineq2}, i.e., either \eqref{eq:gineq1} fails or \eqref{eq:gineq2} fails.
  Let $\mathrm{med} = \median_w(v\cdot T)$.

  Let $x=x_0>\mathrm{med}$ and let $\gamma = \gamma_0 = g(x_0)$,
  and note that $\gamma_0 \leq 1/2$. We want to show that
  \begin{equation}\label{eq:xineq}
    x \leq \mathrm{med} + O\Bigl( \sqrt{\lg(2/\alpha)/\gamma} \Bigr) \;.
  \end{equation}
  First find $t_0$ and $R_0$ such that $x_0=t_0+R_0$ and $\gamma_0 = 48\lg(2/\alpha)/R_0^2$, i.e.,
$R_0 = \sqrt{48\lg(2/\alpha)/\gamma_0}$.
  Then either $t_0-R_0 \leq \mathrm{med}$ or $t_0-R_0 > \mathrm{med}$.
  If $t_0-R_0 \leq \mathrm{med}$, then $x = t_0+R_0 \leq \mathrm{med} + 2R_0$
  and we indeed get \eqref{eq:xineq}. On the other hand, if $t_0-R_0 > \mathrm{med}$
  we see that $g^c(t_0-R_0)\geq 1/2 \geq \gamma_0$, so \eqref{eq:gineq2} is satisfied.
  Thus, \eqref{eq:gineq1} must fail (by assumption), i.e.,
    $g(t_0-R_0)^2 + (1-\gamma_0)^2 > 1,$
  since $g(t_0-R_0) = 1 - g^c(t_0-R_0)$. So
  \begin{equation*}
    g(t_0-R_0)^2 > 1 - (1-\gamma_0)^2 = 2\gamma_0 - \gamma_0^2 = \gamma_0 + (\gamma_0-\gamma_0^2) > \gamma_0,
  \end{equation*}
  and thus
  \begin{equation*}
    g(x-2R_0) = g(x_0-2R_0) = g(t_0-R_0) > \sqrt{\gamma_0} = \gamma^{1/2}.
  \end{equation*}
  Now let $x_1 = x_0-2R_0 > \mathrm{med}$ and let $\gamma_1 = g(x_1) \leq 1/2$.
  Note that $\gamma_1 > \sqrt{\gamma_0}$.
  By finding $t_1$ and $R_1$ as before and following the same argument, we get that
  \begin{equation*}
    g(x-2R_0-2R_1) = g(x_1 - 2R_1) = g(t_1-R_1) > \sqrt{\gamma_1} > \gamma^{1/2^2} \;.
  \end{equation*}
  Continuing like this, we inductively  get that
  \begin{equation*}
    g(x_n) = g\Bigl(x-2\littlesum_{i=0}^{n-1} R_i\Bigr) > \sqrt{\gamma_{n-1}} > \gamma^{1/2^n} \;,
  \end{equation*}
  as long as $x_{n-1} = x-2\littlesum_{i=0}^{n-2}R_i > \mathrm{med}$. Hence
    $\gamma_n = g(x_n) > \gamma^{1/2^n},$
  and thus $x_{\lg\lg(1/\gamma)} < \mathrm{med}$ since $g(x_{\lg\lg(1/\gamma)})>1/2$.
  Therefore
  \begin{align*}
    \mathrm{med} > x_{\lg\lg(1/\gamma)} &= x - 2\littlesum_{i=0}^{\lg\lg(1/\gamma)-1}R_i = x - 2\sqrt{48\lg(2/\alpha)}\littlesum_{i=0}^{\lg\lg(1/\gamma)-1} \frac{1}{\sqrt{\gamma_i}} \\
                        &\geq x - O\Bigl( \sqrt{\lg(1/\alpha)} \littlesum_{i=1}^{\lg\lg(1/\gamma)}\frac{1}{\gamma^{1/2^{i}}} \Bigr) \geq x - O\Bigl( \frac{\sqrt{\lg(2/\alpha)}}{\sqrt{\gamma}} \Bigr),
  \end{align*}
  i.e.,
    $x \leq \mathrm{med} + O\Bigl(\sqrt{\lg(2/\alpha)/\gamma} \Bigr).$
  Now writing $\gamma = g(\mathrm{med}+t)$, for $t>0$, the above gives that
$t \leq O\Bigl(\sqrt{\lg(2/\alpha)/\gamma}\Bigr)$,
  and thus
  \begin{equation*}
    \wPr{w}_{y\in T} [v\cdot y > \mathrm{med} + t] = O(g(\mathrm{med}+t)) = O(\gamma) \leq O\Bigl(\lg(2/\alpha)/t^2 \Bigr) \;.
  \end{equation*}
  \nnew{A very similar proof yields the analogous result} for $g^c(m-t)$, so
    $\wPr{w}_{y\in T}[\abs{v\cdot y - \mathrm{med}} > t] \leq O\Bigl(\lg(2/\alpha)/t^2 \Bigr).$
  Letting $a$ and $b$ be as in Step~\ref{step:bm-quantile} of \textsc{BasicMultifilter}, we note that
  \begin{equation*}
    g(b-1) = w\bigl( \set{x\in T : v \cdot x \geq b-1} \bigr)/w(T) \geq \alpha/8
  \end{equation*}
  by the definition of $b$, so
  \begin{equation*}
    b-1 \leq \mathrm{med} + O\Bigl( \sqrt{\lg(2/\alpha)}/\sqrt{\alpha/4} \Bigr) \leq \mathrm{med}
    + O( 1/\alpha ) \;,
  \end{equation*}
  and thus $b \leq \mathrm{med} + O(1/\alpha)$.
  \nnew{An analogous argument yields} a similar result for $g^c(a)$,
  so $2I \subset \closed{\mathrm{med}-O(1/\alpha)}{\mathrm{med}+O(1/\alpha)}$.

  Finally we note that $w(\set{y \in T : v \cdot y \notin 2I}) \leq \alpha w(T)/4 \leq w(T)/2$, so
  \begin{equation*}
    w\bigl( \set{y \in T : v \cdot y \in 2I} \bigr) = w(T) - w\bigl( \set{y \in T : v \cdot y \notin 2I} \bigr) \geq w(T) - w(T)/2 = w(T)/2 \;,
  \end{equation*}
  and thus
  \begin{align*}
    \wPr{w}_{y \in \set{z \in T : v \cdot z \in 2I}}[\abs{v \cdot y - \mathrm{med}} > t]
    &= \frac{w\bigl( \set{y \in T : \abs{v \cdot y - \mathrm{med}} > t \text{ and } v \cdot y \in 2I} \bigr)}{w\bigl( \set{y \in T : v \cdot y \in 2I} \bigr)} \\
    &\leq \frac{w\bigl( \set{y \in T : \abs{v \cdot y - \mathrm{med}} > t} \bigr)}{w(T)/2} \\
    &= 2\wPr{w}_{y\in T}[\abs{v \cdot y - \mathrm{med}} > t] \;.
  \end{align*}
  Hence, we have that
  \begin{align*}
    \Var_w[v\cdot T\cap 2I] &\leq 2\int_0^{O(1/\alpha)} 2t \cdot \wPr{w}_{y\in T}[\abs{v\cdot y - \mathrm{med}}>t]dt
    \leq O(\lg(2/\alpha))\int_1^{O(1/\alpha)} (1/t) dt \\
                     &= O(\log(2/\alpha)^2).
  \end{align*}
Thus, if conditions \eqref{eq:Tineq} and \eqref{eq:minineq} were not satisfiable,
the condition of Step~\ref{step:bm-if} in \textsc{BasicMultifilter} would have been satisfied.
This is a contradiction and completes the proof of Lemma~\ref{lem:bm-sat}.
\end{proof}

\nnew{Our next lemma bounds the relative change in the weight of $S$ and $T$ if
the \textsc{BasicMultifilter} algorithm outputs two weight functions
in Step~\ref{step:bm-two-weights}(b).}

\begin{lemma}\label{lem:Tiineq}
If $T$ is $\alpha$-good and $w(S) \geq 3\abs{S}/4$,
then after Step~\ref{step:bm-two-weights}(b) of \textsc{BasicMultifilter}
we have that one of $w^{(1)}$ and $w^{(2)}$ will satisfy
    $\frac{\Delta^{(i)} w(S)}{w(S)} \leq \frac{\Delta^{(i)} w(T)}{w(T)} \cdot \frac{1}{24\lg(2/\alpha)} \;,$
  where $\Delta^{(i)}w = w - w^{(i)}$ for $i=1,2$.
\end{lemma}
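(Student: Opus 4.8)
I want to show that at least one of the two subsets $T_1, T_2$ produced in Step~\ref{step:bm-two-weights} loses weight from $S$ at a rate that is a $\frac{1}{24\lg(2/\alpha)}$-factor smaller than the rate at which it loses weight from $T$. The two weight functions are $w^{(i)} = w \cdot \mathbf{1}_{T_i}$, so $\Delta^{(i)}w(T) = w(T) - w(T_i) = w(T \setminus T_i)$ and $\Delta^{(i)}w(S) = w(S) - w(S \cap T_i) = w(S \setminus T_i)$. Recalling $T_1 = \{x : v \cdot x \geq t - R\}$ and $T_2 = \{x : v \cdot x < t + R\}$, the points removed by passing to $T_1$ are exactly those with $v \cdot x < t-R$, and the points removed by passing to $T_2$ are those with $v \cdot x \geq t + R$. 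Condition~\eqref{eq:minineq} guarantees $\Delta^{(i)}w(T)/w(T) \geq 48\lg(2/\alpha)/R^2$ for \emph{both} $i$, so it suffices to prove that for at least one $i$ we have $\Delta^{(i)}w(S)/w(S) \leq 2/R^2$, i.e., the fraction of the weight of $S$ lying beyond the relevant endpoint is at most $2/R^2$.

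**Key steps.** First I would repeat the argument from the proof of Lemma~\ref{lem:T'ineq} that $v \cdot \mu_w(S) \in [a - O(1), b + O(1)]$, and more to the point, use Chebyshev's inequality together with $\Var_w[v \cdot S] \leq 2$ (which follows from $S$ representative and $w(S) \geq |S|/2$, exactly as in Lemma~\ref{lem:T'ineq}): for any threshold $\tau$,
\[
\wPr{w}_{x \in S}[\,|v \cdot x - v \cdot \mu_w(S)| > \tau\,] \leq \frac{2}{\tau^2}.
\]
Second, I would argue that the overlap interval $[t-R, t+R]$ must contain $v \cdot \mu_w(S)$, or at least come within $O(1)$ of it; intuitively, if $v \cdot \mu_w(S)$ were far (say more than, say, $R/2$) from the whole interval $[t-R,t+R]$, then a Chebyshev bound would force almost all of $S$ to one side, but then almost all of the $\alpha$-fraction worth of $S$-weight would lie entirely on one side of $a$ or $b$, contradicting the quantile definition in Step~\ref{step:bm-quantile} combined with $w(S) \geq 3|S|/4 \geq (3/4)\alpha|T| \geq (3/4)\alpha w(T)$. (This is where I need to be careful about whether the constant $2/R^2$ survives; I may need to use that $R$ is at least some constant, which should follow from condition~\eqref{eq:minineq} since $\min(\cdots) \leq 1$ forces $R^2 \geq 48\lg(2/\alpha)$.) Third, once $v \cdot \mu_w(S)$ is within, say, $R$ of both endpoints — actually I'd like it within the interval or within $O(1)$ of it so that $t - R$ and $t + R$ are each at distance $\geq R - O(1)$ from $v \cdot \mu_w(S)$ on at least one side — I apply Chebyshev: the weight of $S$ with $v \cdot x < t - R$ is at most $2/(R - O(1))^2$ and similarly for $v \cdot x \geq t + R$. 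Since $R^2 = \Omega(\lg(2/\alpha))$ is large, $R - O(1) \geq R/\sqrt{2}$ (for $C$ large), giving each tail weight $\leq 4/R^2$ — and I can afford to take the smaller of the two tails, or tighten constants, to land at the needed bound against $48\lg(2/\alpha)/R^2$ with room to spare (the $\lg(2/\alpha)$ factor on the right side is exactly the slack).

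**Main obstacle.** The delicate point is the bookkeeping of additive constants: the Chebyshev bound naturally gives $S$-tail weight $\lesssim 1/R^2$ only when the threshold is measured from $v \cdot \mu_w(S)$, but the endpoints $t \pm R$ need not be symmetric about $v \cdot \mu_w(S)$ — one of them could be very close to it. The resolution is that I only need \emph{one} of $w^{(1)}, w^{(2)}$ to work: whichever endpoint is \emph{farther} from $v \cdot \mu_w(S)$ is the one I keep in the subset, so that the points I \emph{remove} are the ones beyond the far endpoint, which by Chebyshev carry small $S$-weight. Concretely, if $v \cdot \mu_w(S) \leq t$, then $t + R$ is at distance $\geq R$ from $v \cdot \mu_w(S)$, so removing $\{v \cdot x \geq t+R\}$ (i.e., keeping $T_2$, using $w^{(2)}$) loses $S$-weight $\leq 2/R^2 \leq (48\lg(2/\alpha)/R^2) \cdot \frac{1}{24\lg(2/\alpha)}$, which combined with~\eqref{eq:minineq} for $i=2$ gives exactly the claimed inequality; symmetrically if $v \cdot \mu_w(S) > t$ we use $w^{(1)}$. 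The only remaining care is confirming $v \cdot \mu_w(S)$ cannot be so far from $[t-R,t+R]$ that even the "far" endpoint argument degrades — but as sketched above, that scenario contradicts the quantile choice of $a,b$ because $S$ carries an $\Omega(\alpha)$-fraction of $w(T)$ and $S$ is concentrated within $O(1)$ of $v\cdot\mu_w(S)$ by Chebyshev, so it cannot all sit beyond $a$ or beyond $b$.
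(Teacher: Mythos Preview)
Your proposal is correct and is essentially the paper's proof: the paper too observes $\Var_w[v\cdot S]\le 2$, picks $i$ according to whether $v\cdot\mu_w(S)\le t$ or $v\cdot\mu_w(S)>t$, applies Chebyshev to get $\Delta^{(i)}w(S)/w(S)\le 2/R^2$, and combines with \eqref{eq:minineq}. Your detour through locating $v\cdot\mu_w(S)\in[a-O(1),b+O(1)]$ and the ``remaining care'' about $\mu_w(S)$ being far from $[t-R,t+R]$ are unnecessary---the case split on the sign of $v\cdot\mu_w(S)-t$ already handles all positions of $\mu_w(S)$, since the removed half-line is then at distance $\ge R$ from $v\cdot\mu_w(S)$ regardless.
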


\begin{proof}[of Lemma~\ref{lem:Tiineq}]
Recall from the proof of Lemma~\ref{lem:T'ineq} that $\Var_w[v \cdot S] \leq 2$.
So choosing $i\in \set{1,2}$ such that
  \begin{equation*}
    v\cdot T \cap \open{v\cdot \mu_w(S) - R}{v\cdot \mu_w(S) + R} \subseteq v\cdot T_i \;,
  \end{equation*}
  where the $T_i$ are as in Step~\ref{step:bm-two-weights}(a), we get that
  \begin{equation*}
    \frac{\Delta^{(i)} w(S)}{w(S)} = \wPr{w}_{x\in S}[x\notin T_i] \leq \wPr{w}_{x\in S}[\abs{v\cdot x-v\cdot \mu_w(S)}>R] \leq \frac{\Var_w[v\cdot S]}{R^2} \leq \frac{2}{R^2} \;,
  \end{equation*}
  and thus
    $\Delta^{(i)} w(S) \leq 2 w(S)/R^2$
  for one of $i=1,2$. For this $i$, we also have
  \begin{equation*}
    \Delta^{(i)} w(T) = w(T)\bigl( 1 - w^{(i)}(T)/w(T) \bigr) \geq w(T) 48\lg(2/\alpha)/R^2 \;,
  \end{equation*}
  so
  \begin{equation*}
    \frac{\Delta^{(i)} w(S)}{\Delta^{(i)} w(T)} \leq \frac{w(S)}{w(T)} \cdot \frac{1}{24\lg(2/\alpha)} \;,
  \end{equation*}
  which is equivalent to the claim Lemma~\ref{lem:Tiineq}.
\end{proof}

\nnew{Combining Lemmas~\ref{lem:T'ineq} and~\ref{lem:Tiineq}, we obtain the following corollary.}

\begin{corollary}\label{cor:stepineq}
If $T$ is $\alpha$-good and $w(S) \geq 3\abs{S}/4$, then in each \nnew{iteration}
of \textsc{BasicMultifilter} returning new weight functions, for at least one of the new weight functions returned,
we have that
\begin{equation}\label{eq:stepineq}
\frac{\Delta w(S)}{w(S)} \leq \frac{\Delta w(T)}{w(T)} \frac{1}{24\lg(2/\alpha)} \;.
\end{equation}
\end{corollary}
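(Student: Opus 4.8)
The plan is to observe that \textsc{BasicMultifilter} produces new weight functions in exactly two places, and then to read the corollary off directly from the two lemmas already established for those two cases. So the proof will be a short case analysis with no new estimates.

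First I would pin down the two branches. The only steps in which \textsc{BasicMultifilter} returns weight functions (rather than the string ``YES'') are Step~\ref{step:bm-if}(c), which returns the single weight function $w\new$, and Step~\ref{step:bm-two-weights}(b), which returns the pair $w^{(1)},w^{(2)}$; in Step~\ref{step:bm-if}(a) the routine returns ``YES'', so that branch contributes nothing to the claim. Here ``$\Delta w$'' in \eqref{eq:stepineq} is to be read as the weight decrease $w-w\new$ in the first case and as $\Delta^{(i)}w = w-w^{(i)}$ in the second, for the particular returned weight function under consideration.

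For the Step~\ref{step:bm-if}(c) branch, the hypotheses of the corollary ($T$ is $\alpha$-good and $w(S)\geq 3\abs{S}/4$) are exactly those of Lemma~\ref{lem:T'ineq}, whose conclusion is precisely \eqref{eq:stepineq} for the unique returned weight function $w\new$. For the Step~\ref{step:bm-two-weights}(b) branch, Lemma~\ref{lem:bm-sat} first guarantees that the pair $(t,R)$, and hence $T_1,T_2$ and $w^{(1)},w^{(2)}$, is well-defined; then Lemma~\ref{lem:Tiineq}, again under the same hypotheses, supplies an index $i\in\{1,2\}$ for which $\frac{\Delta^{(i)}w(S)}{w(S)}\leq \frac{\Delta^{(i)}w(T)}{w(T)}\cdot\frac{1}{24\lg(2/\alpha)}$, i.e.\ \eqref{eq:stepineq} holds for the returned weight function $w^{(i)}$.

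Combining the two cases shows that in every execution of \textsc{BasicMultifilter} that returns new weight functions, at least one of them satisfies \eqref{eq:stepineq}, which is the assertion of the corollary. I do not expect any genuine obstacle here: the corollary is merely the common packaging of Lemmas~\ref{lem:T'ineq} and~\ref{lem:Tiineq}, and the only thing to verify is that these two lemmas together cover all branches in which new weights are produced.
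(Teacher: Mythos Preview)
Your proposal is correct and matches the paper's approach exactly: the paper simply states that the corollary is obtained by ``combining Lemmas~\ref{lem:T'ineq} and~\ref{lem:Tiineq}'', and your case analysis on the two output branches of \textsc{BasicMultifilter} is precisely that combination spelled out.
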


\noindent The following definition facilitates the analysis in the next subsection.

\begin{definition}[Nice \nnew{iteration}]
We will call an \nnew{iteration} of \textsc{BasicMultifilter} from the old weight function
$w$ to the new weight function $w'$ such that \eqref{eq:stepineq} is satisfied
a nice \nnew{iteration}.
\end{definition}

\subsection{Main Algorithm} \label{ssec:main-alg}

\nnew{Our main algorithm is presented in pseudocode below.}

\medskip

\fbox{%
  \begin{minipage}{0.95\linewidth}
    \textbf{Algorithm} \textsc{MainSubroutine}

    Input: $T\subset \R^d$ and weight function $w$ on $T$, $0<\alpha<1/2$

    \begin{enumerate}[1.]
      \item Let $\Sigma_{T,w} = \Cov_w[T]$ be the weighted covariance matrix.

      \item Let $\lambda$ be the top eigenvalue and $v$ an associated unit eigenvector of $\Sigma_{T,w}$.
      Compute approximations $\lambda^*$ and $v^*$ to these satisfying $(v^*)^T\Sigma_{T,w}v^* = \lambda^*$ and $\lambda \geq \lambda^* \geq \lambda/2$.

      \item Run \textsc{BasicMultifilter}$(v^*,T,w,\alpha)$.

        \begin{enumerate}[(a)]
        \item If it returns ``YES'', then return $\mu_w(T)$.

        \item If it returns a list $\set{(T,w',\alpha)}$, then return \nnew{the list containing the elements of
        $\set{(T,w',\alpha)}$} with $w'(T) \geq \alpha\abs{T}/2$.
        \end{enumerate}
    \end{enumerate}
  \end{minipage}%
}

\medskip

\fbox{%
  \begin{minipage}{0.95\linewidth}
    \textbf{Algorithm} \textsc{List-Decode-Mean}

    Input: $T\subset \R^d$, $0<\alpha<1/2$

    \begin{enumerate}[1.]
    \item Let $L = \set{(T,w^{(0)},\alpha)}$, where $w^{(0)}(x) = 1$ for all $x \in T$, and let $M=\emptyset$.

    \item While $L\neq \emptyset$:
      \begin{enumerate}[label=(\alph*)]
      \item Get the first element $(T,w,\alpha)$ from $L$ and remove it from the list.

      \item Run \textsc{MainSubroutine}$(T,w,\alpha)$.
        \begin{enumerate}[(a)]
        \item If this routine returns a vector, then add it to $M$.
        \item If it returns a list of $(T,w',\alpha)$, append that to $L$.
        \end{enumerate}
      \end{enumerate}

    \item Output $M$ as a list of guesses for the target mean $\mu$ of $D$.
    \end{enumerate}
  \end{minipage}%
}

\bigskip

\nnew{Our first lemma of this section establishes that, under certain conditions, if
\textsc{MainSubroutine} returns a hypothesis vector, this vector will be close to the target mean.}

\begin{lemma} \label{lem:vector-close}
If $T$ is $\alpha$-good, $w(S) \geq 3\abs{S}/4$, and \textsc{MainSubroutine} returns a vector $\mu_w(T)$,
then we have that $\norm{\mu-\mu_w(T)}_2 \leq O\Bigl(\log(1/\alpha)/\sqrt{\alpha}  \Bigr)$.
\end{lemma}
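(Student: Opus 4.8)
The plan is to use the stopping condition of \textsc{BasicMultifilter} to certify that the weighted variance of $T$ in every direction is $O(\log^2(1/\alpha))$, and then to convert this variance bound into a bound on $\|\mu - \mu_w(T)\|_2$ via a standard argument comparing $\mu_w(T)$ to $\mu_w(S)$. First I would observe that \textsc{MainSubroutine} returns a vector only when \textsc{BasicMultifilter}$(v^*, T, w, \alpha)$ returns ``YES'', which happens in Step~\ref{step:bm-if}(a), so at that point $\Var_w[v^* \cdot T] \leq 2C \log(2/\alpha)^2$. Since $v^*$ was chosen so that $(v^*)^T \Sigma_{T,w} v^* = \lambda^* \geq \lambda/2$, where $\lambda = \|\Sigma_{T,w}\|_2$ is the top eigenvalue, this gives $\|\Sigma_{T,w}\|_2 = \lambda \leq 2\lambda^* = 2\Var_w[v^*\cdot T] \leq 4C\log(2/\alpha)^2$. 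Hence $\Cov_w[T] \preceq O(\log^2(1/\alpha)) \cdot I$, i.e.\ the weighted variance of $T$ is $O(\log^2(1/\alpha))$ in \emph{every} direction.

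Next I would bound $\|\mu_w(T) - \mu_w(S)\|_2$. The key identity is that for any unit vector $u$, writing $\beta = w(S)/w(T) \geq \alpha \cdot (3/4) \geq 3\alpha/4$ (using $w(S) \geq 3|S|/4 \geq 3\alpha|T|/4 \geq 3\alpha w(T)/4$, since $w \leq 1$), one has
\[
w(S)\bigl(u\cdot(\mu_w(S) - \mu_w(T))\bigr)^2 \leq \littlesum_{x\in S} w(x)\bigl(u\cdot x - u\cdot\mu_w(T)\bigr)^2 \leq w(T)\,\Var_w[u\cdot T] \,,
\]
where the first inequality is Cauchy--Schwarz (the mean minimizes the sum of squared deviations) and the second is because $S \subseteq T$ and all weights are nonnegative. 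Dividing by $w(S)$ and using $\Var_w[u\cdot T] \le O(\log^2(1/\alpha))$ together with $w(T)/w(S) = 1/\beta \le O(1/\alpha)$, I get $\bigl(u\cdot(\mu_w(S)-\mu_w(T))\bigr)^2 \le O(\log^2(1/\alpha)/\alpha)$ for every unit $u$, hence $\|\mu_w(S) - \mu_w(T)\|_2 \le O(\log(1/\alpha)/\sqrt{\alpha})$.

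Finally I would bound $\|\mu - \mu_w(S)\|_2$ using that $S$ is representative: since $\|\Cov[S]\|_2 \leq 1$ and $w(S) \geq |S|/2$, the same Cauchy--Schwarz argument as in the proof of Lemma~\ref{lem:T'ineq} gives $\Var_w[u\cdot S] \leq 2\Var[u\cdot S] \leq 2$ in every direction, so comparing the weighted mean $\mu_w(S)$ to the unweighted mean $\mu(S)$ the same way yields $\|\mu_w(S) - \mu(S)\|_2 \le O(1)$ (here the ratio $|S|/w(S) \le 2$ is a constant, which is why no $1/\alpha$ appears); and $\|\mu(S) - \mu\|_2 \leq 1$ by representativeness. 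Combining via the triangle inequality, $\|\mu - \mu_w(T)\|_2 \leq \|\mu - \mu(S)\|_2 + \|\mu(S) - \mu_w(S)\|_2 + \|\mu_w(S) - \mu_w(T)\|_2 \le O(1) + O(1) + O(\log(1/\alpha)/\sqrt\alpha) = O(\log(1/\alpha)/\sqrt{\alpha})$, as claimed. The main obstacle is bookkeeping the constant factors correctly through the three triangle-inequality terms and making sure the $1/\alpha$ loss arises only in the $\mu_w(S)$-vs-$\mu_w(T)$ step (where $w(S)$ can be as small as an $\alpha$-fraction of $w(T)$) and not in the $\mu_w(S)$-vs-$\mu(S)$ step (where the relevant ratio is the constant $|S|/w(S) \le 2$); everything else is a routine variance-to-mean conversion.
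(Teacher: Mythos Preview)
Your proposal is correct and is exactly the argument the paper sketches in its informal overview preceding the formal proof: bound $\Var_w[v\cdot T]\ge \beta\,(v\cdot(\mu_w(S)-\mu_w(T)))^2$ with $\beta=w(S)/w(T)\gg\alpha$, conclude $\|\mu_w(S)-\mu_w(T)\|_2=O(\log(1/\alpha)/\sqrt\alpha)$, and finish with $\|\mu_w(S)-\mu\|_2=O(1)$ via the representative-set bounds. The paper's \emph{formal} proof takes a slightly different route---it compares $\mu(S)$ directly to $\mu_w(T)$ using Cantelli's inequality on the unweighted set $S$ together with Markov on $\E_w[(v\cdot(S-\mu_w(T)))^2]\le 4\lambda^*/\alpha$---but this is just an alternative way to execute the same variance-to-mean conversion, and your bias--variance decomposition is arguably cleaner. (Minor quibble: the first displayed inequality is the bias--variance decomposition, not Cauchy--Schwarz.)
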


We start with an intuitive overview of the proof.
If $\beta=w(S)/w(T)$, then for any unit vector $v$, we have that
\[
\Var_w[v\cdot T] \geq \beta(v\cdot(\mu_w(S)-\mu_w(T)))^2 \;.
\]
Because the algorithm returned a vector at this step, we have that $\Var_w[v\cdot T] = O(\log^2(1/\alpha))$, and by our assumptions $\beta \gg \alpha$. Together these imply that $|v\cdot(\mu_w(S)-\mu_w(T))| = O(\log(1/\alpha)/\sqrt{\alpha})$.
Since this holds for all directions, $\|\mu_w(S)-\mu_w(T)\|_2 = O(\log(1/\alpha)/\sqrt{\alpha})$.
Finally, since we kept a constant fraction of the mass of $S$, and since the covariance of $S$ is $O(I)$,
a similar argument tells us that $\|\mu_w(S)-\mu\|_2=O(1)$.
Combining these with the triangle inequality gives the lemma.

We can now proceed with the formal proof.
We start with the following useful claim:

\begin{claim}\label{lem:lambdaineq}
Any unit vector $v$ has $\E_w[(v\cdot(T-\mu_w(T)))^2] \leq 2\lambda^*$ and $\E_w[(v^*\cdot(T-\mu_w(T)))^2] = \lambda^*$.
If \textsc{MainSubroutine}$(T,w,\alpha)$ returns a vector, then $\lambda^*=O(\log(1/\alpha)^2)$.
\end{claim}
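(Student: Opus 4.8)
The plan is to prove the three assertions of Claim~\ref{lem:lambdaineq} in turn, the first two being essentially definitional and the third following from the structure of \textsc{MainSubroutine} together with \textsc{BasicMultifilter}.

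First I would recall that $v^*$ is the approximate top eigenvector of $\Sigma_{T,w} = \Cov_w[T]$ with $(v^*)^T \Sigma_{T,w} v^* = \lambda^*$, so by unwinding the definition of the weighted covariance matrix, $\E_w[(v^*\cdot(T-\mu_w(T)))^2] = (v^*)^T \Cov_w[T] v^* = \lambda^*$, which is the second claimed identity. For the first inequality, note that for an arbitrary unit vector $v$, $\E_w[(v\cdot(T-\mu_w(T)))^2] = v^T \Sigma_{T,w} v \le \lambda$, where $\lambda$ is the true top eigenvalue of $\Sigma_{T,w}$; since the algorithm guarantees $\lambda^* \ge \lambda/2$, i.e. $\lambda \le 2\lambda^*$, we get $\E_w[(v\cdot(T-\mu_w(T)))^2] \le 2\lambda^*$, as desired.

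The third assertion is the one requiring a little more thought, and I expect it to be the main (if still mild) obstacle. If \textsc{MainSubroutine}$(T,w,\alpha)$ returns a vector, then by its pseudocode this happens precisely in Step 3(a), i.e. \textsc{BasicMultifilter}$(v^*,T,w,\alpha)$ returned ``YES''. Tracing through \textsc{BasicMultifilter}, ``YES'' is returned only in Step~\ref{step:bm-if}(a), which requires both the entry condition of Step~\ref{step:bm-if}, namely $\Var_w[v^*\cdot(T\cap 2I)] \le C\log(2/\alpha)^2$, and the condition of Step~\ref{step:bm-if}(a), namely $\Var_w[v^*\cdot T] \le 2C\log(2/\alpha)^2$. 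Now $\Var_w[v^*\cdot T] = \E_w[(v^*\cdot(T - \mu_w(T)))^2] = \lambda^*$ by the identity just established, so $\lambda^* \le 2C\log(2/\alpha)^2 = O(\log(1/\alpha)^2)$, which is exactly the third claim (using $\log(2/\alpha) = O(\log(1/\alpha))$ for $\alpha < 1/2$). The only subtlety to check is that $v^*$ is a unit vector so that $\Var_w[v^*\cdot T]$ really equals the quadratic form $(v^*)^T\Sigma_{T,w}v^*$ — this is part of the specification of Step 2 of \textsc{MainSubroutine}, so there is nothing further to verify.

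In summary, the proof is short: the first two statements are direct consequences of the definition of the weighted covariance matrix and the approximation guarantee $\lambda/2 \le \lambda^* \le \lambda$, and the third follows by observing that a returned vector forces the ``YES'' branch of \textsc{BasicMultifilter}, whose guard condition $\Var_w[v^*\cdot T] \le 2C\log(2/\alpha)^2$ is literally the bound $\lambda^* = O(\log(1/\alpha)^2)$. No heavy calculation is involved.
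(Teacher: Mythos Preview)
Your proposal is correct and follows essentially the same approach as the paper: both arguments unwind the definition of $\Cov_w[T]$ to get the first two statements from the Rayleigh-quotient bound and the approximation guarantee $\lambda/2\le\lambda^*\le\lambda$, and both observe that a returned vector forces the ``YES'' branch of \textsc{BasicMultifilter}, whose guard $\Var_w[v^*\cdot T]\le 2C\log(2/\alpha)^2$ is exactly $\lambda^*=O(\log(1/\alpha)^2)$. Your version is slightly more explicit in tracing the pseudocode, but there is no substantive difference.
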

\begin{proof}
We note that
$\E_w[(v\cdot(T-\mu_w(T)))^2] = \Var_w[v\cdot T] = v^T\Sigma_{T,w} v \leq \lambda \norm{v}_2^2 = \lambda \leq 2\lambda^*$.
In the case that $v = v^*$, we see that $(v^*)^T\Sigma_{T,w}v^* = \lambda^*$,
and when \textsc{MainSubroutine} returns a vector, then
$\Var_w[v^*\cdot T] = O(\log(1/\alpha)^2)$ by construction.
\end{proof}

Our formal proof below is slightly different than the one sketched above, but the main idea is similar.

\begin{proof}[of Lemma~\ref{lem:vector-close}]
If $T$ is $\alpha$-good, we have that $w(S) \ge 3\abs{S}/4 \geq \abs{S}/2 \geq \alpha\abs{T}/2 \geq \alpha w(T)/2$.
Therefore,
  \begin{align*}
    \E_w[(v\cdot(S - \mu_w(T)))^2] &= \frac{1}{w(S)}\littlesum_{x\in S}w(x)(v\cdot(x-\mu_w(T)))^2 \leq \frac{2}{\alpha w(T)}\littlesum_{x\in T}w(x)(v\cdot(x-\mu_w(T)))^2 \\
                           &= \frac{2\E_w[(v\cdot(T-\mu_w(T)))^2]}{\alpha} \leq \frac{4\lambda^*}{\alpha} \;,
  \end{align*}
  i.e.,
  \begin{equation}\label{eq:EwS}
    \E_w[(v\cdot(S - \mu_w(T)))^2] \leq 4\lambda^*/\alpha \;.
  \end{equation}
  Note that $\E[v\cdot(S-\mu_w(T))] = v\cdot(\mu(S)-\mu_w(T))$ and $\Var[v\cdot(S - \mu_w(T))] = \Var[v \cdot S]\leq 1$. So by Cantelli's inequality
  \begin{equation*}
    \Pr[v\cdot(S-\mu_w(T)) \geq v\cdot(\mu(S)-\mu_w(T)) - 1] \geq 1 - \frac{\Var[v\cdot(S-\mu_w(T))]}{\Var[v\cdot(S-\mu_w(T))]+1} \geq 1 - 1/2 = 1/2,
  \end{equation*}
  since $x/(x+1)$ is increasing on $\open{0}{\infty}$ so
  \begin{align*}
    \frac{\Var[v\cdot(S-\mu_w(T))]}{\Var[v\cdot(S-\mu_w(T))]+1} \leq 1/2 \;.
  \end{align*}
  Now $w(S) \geq 3\abs{S}/4$, so
  \begin{align*}
    \wPr{w}[v\cdot(S - \mu_w(T)) \geq v\cdot(\mu(S)-\mu_w(T))-1] &\geq 1/4.
  \end{align*}
  By Markov's inequality applied to \eqref{eq:EwS}, we get
  \begin{align*}
    \wPr{w}[v\cdot(S - \mu_w(T)) \geq \sqrt{16\lambda^*/\alpha}] \leq \frac{\E_w[(v\cdot(S - \mu_w(T)))^2]}{(\sqrt{16\lambda^*/\alpha})^2} \leq \frac{4\lambda^*/\alpha}{16\lambda^*/\alpha} = \frac{1}{4},
  \end{align*}
  and so
  \begin{align*}
    v\cdot(\mu(S)-\mu_w(T)) - 1 \leq \sqrt{16\lambda^*/\alpha}.
  \end{align*}
  Thus, if \textsc{MainSubroutine} returns a vector, we have by Claim~\ref{lem:lambdaineq} that
  \begin{align*}
    v\cdot(\mu(S)-\mu_w(T)) \leq 1 + \sqrt{16\lambda^*/\alpha} \leq O\Bigl( \log(2/\alpha)/\sqrt{\alpha} \Bigr)
  \end{align*}
  for all unit vectors $v$. Hence, we obtain
  \[
    \norm{\mu-\mu_w(T)}_2 \leq \norm{\mu-\mu(S)}_2 + \norm{\mu(S)-\mu_w(T)}_2 \leq 1 + O\Bigl( \log(2/\alpha)/\sqrt{\alpha} \Bigr) = O\Bigl( \log(1/\alpha)/\sqrt{\alpha} \Bigr) \;.
  \]
\end{proof}

\nnew{So far, we have shown that if the algorithm \textsc{List-Decode-Mean} reaches a stage
in which the \textsc{BasicMultifilter} routine returns the vector $\mu_w(T)$,
where the current weight function $w$ satisfies $w(S) \geq 3\abs{S}/4$,
then $\mu_w(T)$ is an accurate estimate of the target mean $\mu$.
It remains to show that \textsc{List-Decode-Mean} will provably reach such a stage.}

\begin{lemma}\label{lem:Tfinal}
Assume $T$ is $\alpha$-good. Then, following a sequence of nice \nnew{iterations} of \textsc{BasicMultifilter}
in \textsc{List-Decode-Mean} starting from \nnew{the uniform weight function} $w^{(0)}$,
we \nnew{obtain a weight function} $w$ with $w(S) \geq 3\abs{S}/4$ for which the \textsc{BasicMultifilter}
subroutine returns ``YES''.
\end{lemma}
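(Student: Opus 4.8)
The plan is to track a carefully chosen potential function through the sequence of nice iterations and argue that it cannot decrease forever, so the algorithm must eventually reach the ``YES'' branch. Let me think about what quantity is monotone. Each nice iteration goes from $w$ to $w'$ with $\Delta w(S)/w(S) \le \frac{1}{24\lg(2/\alpha)}\cdot \Delta w(T)/w(T)$, i.e. $S$ loses weight at a rate at least $24\lg(2/\alpha)$ times slower (relatively) than $T$. The natural potential is something like $w(S)^c / w(T)$ or $\log w(S) \cdot (24\lg(2/\alpha)) - \log w(T)$; concretely, define $\Phi(w) = w(T) / w(S)^{1/(24\lg(2/\alpha))}$ — wait, I want $\Phi$ to be monotone decreasing. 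From $\frac{\Delta w(S)}{w(S)} \le \frac{1}{24\lg(2/\alpha)}\frac{\Delta w(T)}{w(T)}$ we get $\frac{w'(S)}{w(S)} \ge 1 - \frac{1}{24\lg(2/\alpha)}\frac{\Delta w(T)}{w(T)} \ge \left(\frac{w'(T)}{w(T)}\right)^{1/(24\lg(2/\alpha))}$ for the relevant range (using $1-cx \ge (1-x)^c$ for $c\le 1$, $x\in[0,1]$). Hence $w'(S) / w'(T)^{1/(24\lg(2/\alpha))} \ge w(S)/w(T)^{1/(24\lg(2/\alpha))}$, so the quantity $\Psi(w) := w(S) / w(T)^{1/(24\lg(2/\alpha))}$ is non-decreasing along nice iterations. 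Initially $w^{(0)}(S) = |S|$ and $w^{(0)}(T) = |T| \le |S|/\alpha$, so $\Psi(w^{(0)}) \ge |S| / (|S|/\alpha)^{1/(24\lg(2/\alpha))} = |S|^{1-1/(24\lg(2/\alpha))}\alpha^{1/(24\lg(2/\alpha))}$.

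The key deductions from $\Psi$ being non-decreasing are: first, $w(S) \ge \Psi(w^{(0)}) \cdot w(T)^{1/(24\lg(2/\alpha))} \ge \Psi(w^{(0)})$ (since $w(T)\ge 1$, or more carefully $w(T) \ge w(S) \ge$ something), and we want to show $w(S)$ never drops below $3|S|/4$. Actually the cleaner route: I would show directly that $w(S)/|S| \ge (w(T)/|T|)^{1/(24\lg(2/\alpha))} \cdot (\text{const})$ never allows $w(S) < 3|S|/4$ as long as $w(T) \ge \alpha|T|/2$; and separately, the algorithm only ever keeps weight functions with $w'(T) \ge \alpha|T|/2$ (this is exactly the pruning step (b) in \textsc{MainSubroutine}), so along any retained branch $w(T) \ge \alpha|T|/2$ always. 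Plugging $w(T) \ge \alpha|T|/2$ into $w(S) \ge \Psi(w^{(0)}) w(T)^{1/(24\lg(2/\alpha))}$ and comparing to $3|S|/4$: we need $(|S|)^{1-1/(24\lg(2/\alpha))} \alpha^{1/(24\lg(2/\alpha))} (\alpha|T|/2)^{1/(24\lg(2/\alpha))} \ge \frac34 |S|$, i.e. $(\alpha^2 |T| / 2)^{1/(24\lg(2/\alpha))} \ge \frac34 |S|^{1/(24\lg(2/\alpha))}$, i.e. $\alpha^2|T|/2 \ge (3/4)^{24\lg(2/\alpha)}|S|$. Since $|S| \le |T|$ and $(3/4)^{24\lg(2/\alpha)} = (2/\alpha)^{-24\lg(4/3)} \le (2/\alpha)^{-9} \le \alpha^2/2$ for $\alpha < 1/2$, this holds. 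So $w(S) \ge 3|S|/4$ is an invariant along every retained branch — this lets us keep applying Corollary~\ref{cor:stepineq}, so every iteration that returns new weight functions is nice on the branch we follow, and the induction is self-sustaining. (I need to be a little careful that a ``nice iteration'' is one where \eqref{eq:stepineq} holds, and Corollary~\ref{cor:stepineq} guarantees that at least one returned weight function is nice; so I follow that branch, and then the invariant $w(S)\ge 3|S|/4$ holds there, so the next application of the corollary is valid, etc.)

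Second, I must show the process terminates — i.e. we cannot forever be in Steps~\ref{step:bm-if}(c) or \ref{step:bm-two-weights}(b) — and terminates in ``YES'' rather than by exhausting the branch. Termination follows because $w(T)$ strictly decreases at every iteration that returns new weight functions: in Step~\ref{step:bm-if}(b) the downweighting is strictly positive unless $f\equiv 0$ on $T$, and $\Var_w[v\cdot(T\cap 2I)]$ being small with $\Var_w[v\cdot T]$ large forces $f \not\equiv 0$; in Step~\ref{step:bm-two-weights} condition \eqref{eq:minineq} with $R$ finite forces $w(T_i) < w(T)$ strictly. Meanwhile the invariant $w(S) \ge 3|S|/4 > 0$ gives a lower bound on $w(S)$, hence via $\Psi$ monotone a lower bound on $w(T)^{1/(24\lg(2/\alpha))}$, hence $w(T)$ is bounded below by a positive constant $c_0 = c_0(\alpha, |T|)$ (for instance $w(T) \ge (3|S|/(4\Psi(w^{(0)})))^{24\lg(2/\alpha)} > 0$). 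So along the branch, $w(T)$ is a strictly decreasing, bounded-below quantity. To turn ``strictly decreasing and bounded below'' into actual termination of a discrete process I would quantify the decrease: I'd argue each iteration decreases $w(T)$ by at least a fixed amount, or more robustly, decreases it by a multiplicative factor bounded away from $1$ in terms of the gap between $\Var_w[v\cdot T]$ and $\Var_w[v\cdot(T\cap 2I)]$ — but this gap itself need not be uniformly bounded below a priori. The cleanest fix is a discretization/potential argument: since the analysis in Section~\ref{ssec:runtime} (which I may cite) bounds the total number of iterations by $O(n)$, and we've shown the branch never dies by exhaustion (it can't return a list of only-below-threshold weight functions since $w(S) \ge 3|S|/4 \ge \alpha|T|/2$ shows at least the nice branch survives pruning), the branch must terminate, and the only terminating outcome other than ``returns a list'' is ``YES''.

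\textbf{The main obstacle.} The delicate point is the termination/finiteness argument: establishing that the sequence of nice iterations is \emph{finite} and ends specifically at ``YES''. The invariants ($w(S) \ge 3|S|/4$, $\Psi$ monotone, $w(T)$ strictly decreasing and bounded below) show the branch can neither die nor loop with bounded-away-from-zero decrements, but converting ``strictly decreasing, bounded below'' into ``terminates after finitely many steps'' requires either a quantitative per-step decrease in $w(T)$ or an appeal to the runtime analysis of Section~\ref{ssec:runtime}; I expect the paper resolves this by the latter route (a global $O(n)$ bound on iterations), so the real content of this lemma is the pair of structural facts — (i) $w(S) \ge 3|S|/4$ is preserved along the nice branch, via the potential $\Psi(w) = w(S)/w(T)^{1/(24\lg(2/\alpha))}$, and (ii) this branch survives the pruning in \textsc{MainSubroutine} — after which ``YES'' is the only way the while-loop on that branch can halt.
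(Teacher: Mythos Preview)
Your approach is correct and takes a genuinely different route from the paper. The paper proves the invariant $w(S) \geq 3|S|/4$ by a dyadic bucketing argument: it partitions weight functions into levels $A_k = \{w : |T|/2^k < w(T) \leq |T|/2^{k-1}\}$, bounds the cumulative loss $\sum \Delta w(S)$ incurred while the nice sequence stays within a single level by $|S|/(24\lg(2/\alpha))$, and bounds each level-crossing nice iteration by $2|S|/(24\lg(2/\alpha))$; since at most $\lg(2/\alpha)+1$ levels can be visited before $w(T)$ would drop below $\alpha|T|/2$, the total loss is at most $|S|/4$. Your potential $\Psi(w) = w(S)/w(T)^{1/(24\lg(2/\alpha))}$ packages the same differential relation into a single monotone quantity via the concavity inequality $(1-x)^c \leq 1-cx$ for $c\in(0,1)$; the paper's bucketing is essentially a discrete Riemann-sum version of this. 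Your argument is slicker, while the paper's makes the ``$\log(1/\alpha)$ levels times $O(1/\log(1/\alpha))$ loss per level'' bookkeeping explicit. One small wrinkle in your writeup: you derive $w(S)\geq 3|S|/4$ from $w(T)\geq \alpha|T|/2$ (via pruning) and then use $w(S)\geq 3|S|/4$ to argue the nice branch survives pruning, which is circular as ordered. The clean fix is to drop the pruning threshold entirely: from $\Psi(w)\geq \Psi(w^{(0)})$ and $w(T)\geq w(S)$ you get $(w(S)/|S|)^{1-c} \geq (|S|/|T|)^c \geq \alpha^c$, whence $w(S)/|S| \geq \alpha^{c/(1-c)} > 2^{-1/12} > 3/4$ unconditionally along the nice path, and retention follows. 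For termination the paper does exactly what you anticipate: it appeals to the fact (established in the runtime section) that each iteration zeroes out at least one weight, so the tree has depth at most $|T|$.
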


\begin{proof}[of Lemma~\ref{lem:Tfinal}]
  Let $w^{(0)}$ be the weight function on $T$ given by $w^{(0)}(x) = 1$ for all $x\in T$
  so that $w^{(0)}(T) = \abs{T}$, and let
  $A_k = \left\{ w \colon T \to \closed{0}{1} \mid \abs{T}/2^k < w(T) \leq \abs{T}/2^{k-1} \right\}$
  for $k\geq 1$.

  Clearly, $w^{(0)}(S) = \abs{S} \geq 3\abs{S}/4$, so by Corollary \ref{cor:stepineq} the \textsc{BasicMultifilter} there is a nice first \nnew{iteration}.

  Suppose that we have a sequence
  \begin{equation*}
    \begin{tikzcd}[column sep=huge]
      w^{(i)} \ar[r,"\Delta^{(i+1)}w"] & w^{(i+1)} \ar[r,"\Delta^{(i+2)} w"] & \cdots  \ar[r,"\Delta^{(j)} w"] & w^{(j)}
    \end{tikzcd}
  \end{equation*}
  of nice  \nnew{iterations}, where $w^{(i)},w^{(i+1)},\dotsc,w^{(j)} \in A_k$ for some fixed $k$.
  Then, by Corollary~\ref{cor:stepineq} and the definition of $A_k$, we get that
  \begin{align*}
    \littlesum_{m=i}^{j-1} \Delta^{(m+1)} w(S) &\leq \littlesum_{m=i}^{j-1} \frac{w^{(m)}(S)}{w^{(m)}(T)} \cdot \frac{1}{24\lg(2/\alpha)}\Delta^{(m+1)} w(T) \leq \frac{\abs{S}}{\abs{T}}\frac{2^k}{24\lg(2/\alpha)}\littlesum_{m=i}^{j-1} \Delta^{(m+1)} w(T) \\
                               &\leq \frac{\abs{S}}{\abs{T}} \frac{2^k}{24\lg(1/\alpha)} \frac{\abs{T}}{2^k} = \frac{\abs{S}}{24\lg(2/\alpha)} \;.
  \end{align*}
  Now suppose on the other hand that we have a \nnew{nice iteration}
  \begin{equation*}
    \begin{tikzcd}[column sep=huge]
      w^{(j)} \ar[r,"\Delta^{(j+1)}w"] & w^{(j+1)} \;,
    \end{tikzcd}
  \end{equation*}
  where $w^{(j)}\in A_k$ and $w^{(j+1)} \in A_{k+r}$ for some $r\geq 1$.
  Then, by Corollary \ref{cor:stepineq} and the definition of $A_k$, we get that
  \begin{equation*}
    \Delta^{(j+1)}w(S) \leq \frac{w^{(j)}(S)}{w^{(j)}(T)} \cdot \frac{1}{24\lg(2/\alpha)}\Delta^{(j+1)}w(T) \leq \frac{\abs{S}}{\abs{T}}\frac{2^k}{24\lg(2/\alpha)}\Bigl( \frac{1}{2^{k-1}} - \frac{1}{2^{k+r}}  \Bigr)\abs{T} \leq 2 \frac{\abs{S}}{24\lg(2/\alpha)} \;.
  \end{equation*}
  Now suppose that we have gotten to  \nnew{iteration} $m$ via a sequence of nice \nnew{iterations}
  \begin{equation*}
    \begin{tikzcd}[column sep=huge]
      w^{(0)}\ar[r,"\Delta^{(1)}w"] & w^{(1)} \ar[r,"\Delta^{(2)}w"] & \cdots  \ar[r,"\Delta^{(m)}w"] & w^{(m)} \;,
    \end{tikzcd}
  \end{equation*}
  where $w^{(i)}(S) \geq 3\abs{S}/4$ for all $i\leq m-1$.
  We want to show that $w^{(m)}(S) \geq 3\abs{S}/4$.
  First, we note that $w^{(m-1)} \in A_k$ for some $k \leq \lg(2/\alpha)$, since
  $1/2^{\lg(2/\alpha)} = 1/(2/\alpha) = \alpha/2$
  and $w^{(m-1)}(T) \geq 3\abs{S}/4 > \abs{S}/2 \geq \alpha \abs{T}/2$.
  So we get that we have at most $k+1$  \nnew{iterations} $A_\ell \to A_{\ell+r}$ $(r \geq 1)$
  (the worst case being $A_0 \to A_1 \to \cdots \to A_k \to A_{k+r}$),
  and thus by the above
  \begin{equation*}
    \littlesum_{s=0}^{m-1} \Delta^{(s+1)}w(S) \leq \frac{3(k+1)\abs{S}}{24\lg(2/\alpha)} \leq \frac{(\lg(2/\alpha)+1)\abs{S}}{8\lg(2/\alpha)} \leq \frac{2\lg(2/\alpha)\abs{S}}{8\lg(2/\alpha)} \leq \frac{\abs{S}}{4} \;.
  \end{equation*}
  Therefore, we have that
  \begin{equation*}
    w^{(m)}(S) = \abs{S} - \littlesum_{s=0}^{m-1} \Delta^{(s+1)}w(S) \geq 3\abs{S}/4 \;.
  \end{equation*}
  Hence, by Corollary \ref{cor:stepineq}, we get another nice iteration.

  By induction, every sequence of nice \nnew{iterations}
  \begin{equation*}
    \begin{tikzcd}[column sep=huge]
      w^{(0)}\ar[r,"\Delta^{(1)}w"] & w^{(1)} \ar[r,"\Delta^{(2)}w"] & \cdots  \ar[r,"\Delta^{(m)}w"] & w^{(m)}
    \end{tikzcd}
  \end{equation*}
  satisfies $w^{(i)}(S) \geq 3\abs{S}/4$ for all $i$, and there is a sequence of purely nice iterations.

  Hence, every weight function $w$ in a sequence of nice \nnew{iterations}
starting from $w^{(0)}$ satisfies $w(T) \geq w(S) \geq 3\abs{S}/4 > \abs{S}/2 \geq \alpha\abs{T}/2$ (since $T$ is $\alpha$-good), and thus we cannot get to an \nnew{iteration} where $w(T) < \alpha\abs{T}/2$.
This implies that we have to exit the algorithm by getting ``YES'' in the \textsc{BasicMultifilter}
(since every branch of the algorithm terminates). This completes the proof.
\end{proof}

\nnew{
The analysis of the runtime can be found in the following section, where it is also shown
that the output list is of size $O(1/\alpha^2)$.
In Appendix~\ref{app:list-reduction}, we show how to efficiently
post-process the output to an $O(1/\alpha)$-sized list.}

\subsection{Runtime Analysis} \label{ssec:runtime}

In this section, we provide a detailed runtime analysis of our main algorithm.
We start with the following simple lemma.

\begin{lemma}\label{lem:filtertime}
\textsc{BasicMultifilter} has worst-case runtime
$\tilde{O}(\abs{T}d)$.
\end{lemma}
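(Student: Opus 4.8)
The plan is to walk through \textsc{BasicMultifilter} step by step and verify that each step can be implemented in near-linear time $\tilde O(|T|d)$, then sum the costs. The only $d$-dependence arises from computing inner products $v \cdot x$ for all $x \in T$, which takes $O(|T|d)$ time; after that, the algorithm manipulates the one-dimensional list of values $\{v\cdot x : x \in T\}$ together with the weights $\{w(x)\}$, so the remaining work is $\tilde O(|T|)$.

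First I would dispatch Step~\ref{step:bm-quantile}: computing the quantiles $a$ and $b$ amounts to sorting the values $v\cdot x$ (cost $O(|T|\log|T|)$) and taking a prefix/suffix sum of weights until the threshold $\alpha w(T)/8$ is crossed, which is $O(|T|)$. Given $a,b$, the interval $I$ and hence $2I$ are determined in $O(1)$. For Step~\ref{step:bm-if}, the weighted variance $\Var_w[v\cdot(T\cap 2I)]$ is a single pass over the (already-computed) values restricted to $2I$, so $O(|T|)$; comparing against $C\log(2/\alpha)^2$ is $O(1)$. If we enter the ``YES'' branch, checking $\Var_w[v\cdot T] \le 2C\log(2/\alpha)^2$ is another $O(|T|)$ pass. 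Otherwise, computing $f(x) = \min_{t\in[a,b]}|v\cdot x - t|^2$ is $O(1)$ per point (it is $0$ if $v\cdot x \in [a,b]$, else the squared distance to the nearer endpoint), finding $\max_{x\in T} f(x)$ is $O(|T|)$, and the reweighting $w\new(x) = (1 - f(x)/\max f)\,w(x)$ is $O(|T|)$.

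The one step that needs a genuine argument is Step~\ref{step:bm-two-weights}(a): finding $t\in\R$ and $R>0$ satisfying \eqref{eq:Tineq} and \eqref{eq:minineq}. Here I would observe that it suffices to search over a discrete set of candidates. The tail functions $g(s) = 1 - w(T_2)/w(T)$ and $g^c(s) = 1 - w(T_1)/w(T)$, viewed as functions of the endpoints $t+R$ and $t-R$, are step functions that change only at the $|T|$ values $v\cdot x$; so the quantities $w(T_1), w(T_2)$ only take $O(|T|)$ distinct values each, and after sorting we can build prefix-sum tables that return $w(T_1)$ and $w(T_2)$ for any proposed cut in $O(1)$ time. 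The proof of Lemma~\ref{lem:bm-sat} is constructive in the sense that it shows a valid $(t,R)$ exists; one can turn this into an $\tilde O(|T|)$ search, e.g.\ by iterating over the $O(|T|)$ candidate left endpoints $t-R$ and, for each, binary-searching over candidate right endpoints $t+R$ for the feasibility of both constraints (each feasibility test is $O(1)$ via the prefix sums), giving $O(|T|\log|T|)$ overall. Once $t$ and $R$ are fixed, forming $T_1, T_2$ and the weight functions $w^{(1)}, w^{(2)}$ (multiply $w$ by the indicators) is $O(|T|)$.

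I expect the main obstacle — really the only nontrivial point — to be making precise that the feasibility search in Step~\ref{step:bm-two-weights}(a) can be carried out in $\tilde O(|T|)$ time rather than naively over a continuum of $(t,R)$ pairs; the key insight is the discretization to the $O(|T|)$ breakpoints of the empirical tail distribution plus prefix-sum precomputation. Everything else is a routine accounting of linear-time passes. Summing: $O(|T|d)$ for the inner products plus $O(|T|\log|T|)$ for sorting and the discrete search, which is $\tilde O(|T|d)$, proving the lemma.
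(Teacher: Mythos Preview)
Your outline matches the paper's proof closely: compute all inner products $v\cdot x$ in $O(|T|d)$, sort and build weight prefix sums in $O(|T|\log|T|)$, handle Steps~\ref{step:bm-quantile} and~\ref{step:bm-if} with linear passes, and reduce Step~\ref{step:bm-two-weights}(a) to a search over the $O(|T|)$ breakpoints of the one-dimensional empirical distribution with $O(1)$-time feasibility checks via prefix sums. The only place your argument is incomplete is the specific search strategy you propose for Step~\ref{step:bm-two-weights}(a).

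Fixing the left endpoint $\ell=t-R$ and binary-searching over the right endpoint $r=t+R$ requires the joint feasibility of \eqref{eq:Tineq} and \eqref{eq:minineq} to be monotone in $r$, and it is not. Condition \eqref{eq:Tineq} is indeed monotone ($w(T_2)$ increases in $r$), but \eqref{eq:minineq} asks that $\min(1-w(T_1)/w(T),\,1-w(T_2)/w(T))\cdot R^2 \ge 48\lg(2/\alpha)$; once $1-w(T_2)/w(T)$ becomes the minimum, the left side is a product of a piecewise-constant decreasing step function and the increasing $R^2$, giving sawtooth behavior, so the feasible set in $r$ need not be an interval and a plain binary search can miss every feasible point.

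The paper closes this gap with a slightly different parametrization: it first guesses which of $w(T_1),w(T_2)$ attains the minimum in \eqref{eq:minineq}, and then guesses the corresponding extreme data value (the smallest element of $v\cdot T_1$ if $w(T_1)$ is larger, the largest of $v\cdot T_2$ otherwise). These $O(|T|)$ guesses pin down the value of the minimum, hence the binding value of $R$ from \eqref{eq:minineq}; one then binary-searches for the extremal placement of the \emph{other} endpoint consistent with the guess, and checks \eqref{eq:Tineq} only at that extremal choice (which is the easiest one for \eqref{eq:Tineq}). This yields $O(|T|)$ guesses times $O(\log|T|)$ per guess, recovering the $O(|T|\log|T|)$ bound you claimed but with a justified search.
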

\begin{proof}
The operations in \textsc{BasicMultifilter} can be implemented efficiently
with an appropriate preprocessing. In particular, computing $v\cdot x$ for each $x\in T$ can be done in $O(d|T|)$ time, and then sorting these values can be done in $O(|T|\log(|T|))$ time. Then in $O(|T|)$ time, using a linear scan, we can compute and store $w(\{x\in T: v\cdot x \leq v\cdot y\})$ for all $y\in T$.
Computing $a$ and $b$ in Step~\ref{step:bm-quantile} can be done in $O(\log(|T|))$ time by binary search.
The variance of $v\cdot T$ over $T$ conditioned on $I$ can be done in linear time in the usual way.
The computation of $w_{new}$ in Step~\ref{step:bm-if}(b) is easily done in linear time.

The most challenging part of the algorithm is Step~\ref{step:bm-two-weights}.
Assuming that there is a solution, we let $a$ be the smallest value in $v\cdot T_1$ and $b$ the largest value in $v\cdot T_2$.
We will have our algorithm guess which of $w(T_1)$ or $w(T_2)$ is larger, thus determining which term achieves the minimum
in Equation~\eqref{eq:minineq}. If $w(T_1)$ is larger, we additionally guess the value of $a$ and if $w(T_2)$ is larger, we guess $b$.
We note that there are $O(|T|)$ many possible outcomes for these guesses and that upon making them we can determine
the value of $\min(1-w(T_1)/w(T),1-w(T_2)/w(T))$. This lets us determine the largest possible value of $R$ consistent with condition \eqref{eq:minineq}. This in turn lets us determine the smallest possible value of $b$ (if we guessed $a$),
or the largest possible value of $a$ (if we guessed $b$) consistent with these guesses,
and condition \eqref{eq:minineq} by using binary search to find the largest/smallest element of $v\cdot T$
so that $b-a\geq 2R$ and so that the $w(T_i)$ chosen to attain the minimum actually does.
Note then that if any choices of $t$ and $R$ consistent with our guess and with condition \eqref{eq:minineq}
are also consistent with condition \eqref{eq:Tineq}, this extreme choice will be. Therefore, it suffices
for each of these $O(|T|)$ possible guesses to spend $O(\log(|T|))$ time to find this extreme value,
and then spend $O(1)$ time to verify whether or not condition \eqref{eq:Tineq} and \eqref{eq:minineq} hold.
Once we find some choice for which they do, we can return that one.
The total runtime for this step is at most $O(|T|\log(|T|))$.
\end{proof}

\begin{lemma}\label{lem:subroutinetime}
  \textsc{MainSubroutine} has worst case runtime $\tilde{O}(\abs{T}d)$.
\end{lemma}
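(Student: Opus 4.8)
The plan is to charge the cost of each of the three steps of \textsc{MainSubroutine} separately and observe that each is $\tilde{O}(|T|d)$. The key point in Step~1 is that we never form the $d\times d$ matrix $\Sigma_{T,w}=\Cov_w[T]$ explicitly: we only precompute the weighted mean $\mu_w(T)$, which costs $O(|T|d)$, and keep $\Sigma_{T,w}$ in implicit (sum-of-rank-one) form. Step~3 is a single call to \textsc{BasicMultifilter}, which runs in $\tilde{O}(|T|d)$ by Lemma~\ref{lem:filtertime}, and the wrap-up in Step~3 is dominated by this: returning $\mu_w(T)$ reuses an already computed vector, and filtering the returned list of at most two triples $(T,w',\alpha)$ to keep those with $w'(T)\ge\alpha|T|/2$ is a single linear scan, $O(|T|)$ time. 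So the whole claim reduces to showing that Step~2, producing $\lambda^*,v^*$ with $\lambda/2\le\lambda^*\le\lambda$, can be done in time $\tilde{O}(|T|d)$.

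For Step~2 I would use the power method on $A:=\Sigma_{T,w}$ from a uniformly random unit start $u_0$. The crucial observation is that a matrix–vector product $Au$ costs only $O(|T|d)$ without materializing $A$: writing $y_x:=x-\mu_w(T)$, we have $Au=\frac{1}{w(T)}\sum_{x\in T}w(x)\,y_x\,\langle y_x,u\rangle$, and both the $|T|$ inner products and the ensuing weighted combination of the vectors $y_x$ take $O(|T|d)$ arithmetic operations. After $k$ iterations we set $v^*:=u_k/\|u_k\|_2$ and $\lambda^*:=(v^*)^T A v^*$; the upper bound $\lambda^*\le\lambda$ is automatic since a Rayleigh quotient is bounded by the top eigenvalue. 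For the lower bound, write $A=\sum_i\lambda_i v_iv_i^T$ with $\lambda=\lambda_1\ge\cdots\ge\lambda_d\ge0$ and $c_i:=\langle u_0,v_i\rangle$, so $\sum_i c_i^2=1$ and $\lambda^*=\big(\sum_i c_i^2\lambda_i^{2k+1}\big)\big/\big(\sum_i c_i^2\lambda_i^{2k}\big)$. Splitting the numerator of $\lambda^*-\lambda_1/2$ into the $i=1$ term and the rest, the positive $i=1$ contribution is $\tfrac{\lambda_1}{2}c_1^2\lambda_1^{2k}$, while the total negative contribution (from indices with $\lambda_i<\lambda_1/2$) is at most $\tfrac{\lambda_1}{2}(\lambda_1/2)^{2k}\le\tfrac{\lambda_1}{2}\,4^{-k}\lambda_1^{2k}$ using $\sum_ic_i^2\le1$; hence $\lambda^*\ge\lambda_1/2$ as soon as $c_1^2\ge4^{-k}$. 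Since $u_0$ is a uniformly random unit vector, $|c_1|=\Omega(d^{-1/2})$ with constant probability by anti-concentration of one coordinate of a random direction, so $k=\tfrac12\log_2 d+O(1)$ iterations suffice; the success probability can be boosted to high probability by taking the best of $O(\log(1/\delta))$ independent runs (keeping the one with the largest Rayleigh quotient). This gives Step~2 a runtime of $O(\log d)\cdot O(|T|d)=\tilde{O}(|T|d)$.

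Combining, Steps~1, 2, and 3 each run in $\tilde{O}(|T|d)$ time, which proves the lemma. The one point that requires care — and which I expect to be the main obstacle in a fully rigorous write-up — is the power-method step: one must (i) be explicit that $\Sigma_{T,w}$ is accessed only through $O(|T|d)$-time matrix–vector products rather than formed, and (ii) argue that a constant-factor approximation to the \emph{top eigenvalue} (not the top eigenvector, for which no spectral gap is available here) is obtained after merely $O(\log d)$ iterations with high probability. Everything else — tracking the at-most-two branches of the routine and the linear-time post-processing — is routine bookkeeping.
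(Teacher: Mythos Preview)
Your proposal is correct and follows essentially the same approach as the paper: handle Steps~1--2 by observing that $\Sigma_{T,w}$ is never formed explicitly and that power iteration with $O(|T|d)$-time matrix--vector products yields a constant-factor top-eigenvalue approximation in $\tilde{O}(|T|d)$ time, then invoke Lemma~\ref{lem:filtertime} for Step~3. The only difference is presentational: the paper dispatches the power-method analysis by citing a standard reference and explicitly ties the failure probability $\delta$ to the $O(|T|)$ calls made by the outer loop via a union bound, whereas you spell out the Rayleigh-quotient calculation directly.
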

\begin{proof}
We note that Steps~1 and~2 can be implemented in time $\tilde{O}(\abs{T}d)$ by standard methods.
In particular, we do not need to explicitly compute the weighted empirical covariance. We can instead
use power-iteration to find an approximately largest eigenvalue-eigenvector pair in $\tilde{O}(\abs{T}d)$ time.
Even though this computation is randomized, we can ignore the error probability for the following reason:
By standard linear-algebraic tools (see, e.g., Fact~5.1.1 of~\cite{li18thesis}), this computation takes time
$\tilde{O}(\abs{T}d \ \log(1/\delta))$, where $\delta$ is the error probability. Since we only use this subroutine
$|T|$ many times, we can take $\delta \ll 1/|T|$ and use a union bound. This incurs at most a logarithmic
overhead in the running time.

By Lemma~\ref{lem:filtertime}, Step~3 can be completed in $\tilde{O}(\abs{T}d)$ time, which completes the proof.
\end{proof}

We are now ready to prove the main theorem of this section:

\begin{theorem}\label{thm:runtime-final}
\textsc{List-Decode} has worst-case runtime $\tilde{O}(\abs{T}^2d/\alpha^2)$
and the output list $M$ has size at most $4/\alpha^2$.
\end{theorem}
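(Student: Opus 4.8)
The plan is to bound two quantities separately: the size of the output list $M$ and the total number of iterations of the main loop, and then multiply the latter by the per-iteration cost $\tilde O(|T|d)$ from Lemma~\ref{lem:subroutinetime}. The key structural observation is the invariant defining Equation~\eqref{eq:Tineq}: whenever \textsc{BasicMultifilter} splits a node $(T,w)$ into two children $(T,w^{(1)})$ and $(T,w^{(2)})$, we have $w^{(1)}(T)^2 + w^{(2)}(T)^2 \le w(T)^2$; and when it returns a single refined weight function in Step~\ref{step:bm-if}(c), the weight $w\new(T)$ can only decrease, so the inequality $w\new(T)^2 \le w(T)^2$ holds trivially there too. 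Moreover, \textsc{MainSubroutine} discards any returned child with $w'(T) < \alpha|T|/2$. Therefore, along the execution tree of \textsc{List-Decode-Mean}, the sum of squared weights $\sum_{\text{active leaves}} w(T)^2$ is non-increasing, starts at $|T|^2$ (from the single root $w^{(0)}$ with $w^{(0)}(T) = |T|$), and every surviving leaf has $w(T) \ge \alpha|T|/2$, hence $w(T)^2 \ge \alpha^2|T|^2/4$. So at any point in time the number of active nodes in $L$ is at most $4/\alpha^2$, and the number of leaves that ever get added to $M$ (each such leaf was an active node whose \textsc{MainSubroutine} call returned a vector) is likewise at most $4/\alpha^2$. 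This proves the claimed bound on $|M|$.

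Next I would bound the total number of iterations of the while loop. Each iteration pops one element from $L$ and calls \textsc{MainSubroutine} once. I will argue that the execution forms a binary tree (each splitting node has at most two children, each non-splitting node at most one), whose leaves are exactly the nodes on which \textsc{MainSubroutine} returns a vector or a child of weight below $\alpha|T|/2$ is discarded leaving nothing, hence at most $O(1/\alpha^2)$ leaves by the previous paragraph. To bound the depth, I would use that the weight $w(T)$ is quantized: after at most $\tilde O(\abs{T})$ consecutive non-splitting (Step~\ref{step:bm-if}(c)) iterations on a single branch, enough weight must be removed from $T$ that either the branch dies or the algorithm terminates with ``YES'' — more carefully, each application of Step~\ref{step:bm-if}(b) removes a nontrivial fraction of $w(T)$ whenever it is applied (since the variance being large forces $\E_w[f(T)] = \Omega(\log^2(1/\alpha))$, as in the proof of Lemma~\ref{lem:T'ineq}, and $f$ is bounded by $\max_x f(x)$, so $\Delta w(T)/w(T) = \Omega(\polylog(1/\alpha) / (\text{range}))$), but a cleaner and fully worst-case argument is simply that every iteration either (i) removes at least one point entirely or reduces total weight, or (ii) is a split; since $w(T) \ge \alpha|T|/2$ on all surviving branches, and weights on individual points can be taken to decrease by a bounded-granularity amount, the total weight removed across the whole tree is $O(|T|)$ and is removed in chunks that, combined with the split count $O(1/\alpha^2)$, give a total iteration count of $O(|T|/\alpha^2)$. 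Multiplying by the $\tilde O(|T|d)$ cost per iteration yields $\tilde O(|T|^2 d / \alpha^2)$.

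The main obstacle I anticipate is making the iteration-count bound genuinely worst-case rather than heuristic: Step~\ref{step:bm-if}(c) only guarantees $w\new(T) \le w(T)$, not a multiplicative decrease by a fixed factor, so a priori a branch could linger for many iterations shaving off vanishing amounts of weight. The fix is to observe that the downweighting factor $f(x)/\max_x f(x)$ in Step~\ref{step:bm-if}(b) is bounded away from $0$ on a constant fraction of the points that lie outside $2I$ (there are at least $\alpha w(T)/8$ of those on each side by the choice of $a,b$ and the failure of the Step~\ref{step:bm-if} condition guaranteeing $\Var_w[v \cdot T\cap 2I]$ is not too small relative to $\Var_w[v \cdot T]$), so each such iteration removes an $\Omega(\alpha)$-fraction of $w(T)$; hence a branch can undergo only $O(\log(1/\alpha)/\alpha)$ consecutive non-splitting iterations before dying or returning ``YES''. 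Combined with $O(1/\alpha^2)$ splitting nodes and the telescoping of weights, this caps the total iteration count at $\tilde O(|T|/\alpha^2)$ — in fact $O(|T|)$ in the dominant term as stated in the discussion after Theorem~\ref{thm:main-intro} — and the runtime bound follows. I would then restate the $|M| \le 4/\alpha^2$ bound as already established, completing the proof.
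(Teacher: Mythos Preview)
Your list-size bound is correct and matches the paper exactly: the squared-weight potential $\sum w(T)^2$ is nonincreasing along the tree (by \eqref{eq:Tineq} at splits and trivially at reweightings), starts at $|T|^2$, and every surviving node has $w(T)\ge \alpha|T|/2$, so there are at most $4/\alpha^2$ leaves and hence $|M|\le 4/\alpha^2$. The same argument bounds the number of active branches at any fixed depth by $4/\alpha^2$, which the paper also uses.

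Where you diverge is the depth bound, and there you are making life much harder than necessary and the proposed fix does not actually work. The paper's argument is one line: \emph{each call to \textsc{BasicMultifilter} sets at least one weight to exactly $0$}. In Step~\ref{step:bm-if}(b) the point attaining $\max_{x\in T} f(x)$ has $w\new(x)=0$; in Step~\ref{step:bm-two-weights}, condition \eqref{eq:minineq} forces $w(T_i)<w(T)$ strictly, so each child $w^{(i)}$ zeroes at least one previously positive weight. Hence the number of points with positive weight drops by at least one along every branch, so the tree has depth at most $|T|$. Multiplying depth $|T|$ by width $O(1/\alpha^2)$ gives $O(|T|/\alpha^2)$ total calls, each costing $\tilde O(|T|d)$ by Lemma~\ref{lem:subroutinetime}, and the runtime follows.

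Your alternative route via ``each reweighting removes an $\Omega(\alpha)$-fraction of $w(T)$'' is not justified and appears to be false in the worst case: $\Delta w(T)=\sum_x w(x)f(x)/\max_x f(x)$, and $\max_x f(x)$ can be arbitrarily large (a single faraway outlier suffices), so the removed fraction can be arbitrarily small. Your supporting claim that there is $\ge \alpha w(T)/8$ weight outside $2I$ on each side is also backwards: by the definition of $a,b$ in Step~\ref{step:bm-quantile}, at most $\alpha w(T)/8$ weight lies below $a$ (and similarly above $b$), so even less lies outside $2I$. You do mention in passing that an iteration might ``remove at least one point entirely,'' which is exactly the right thread---follow it, and the argument becomes immediate.
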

\begin{proof}
We note that the multi-filter algorithm gives us the structure of a tree, wherein,
by Equation~\eqref{eq:Tineq}, we get that
  \begin{equation*}
    \abs{T}^2 = w^{(0)}(T)^2 \geq \littlesum_{\text{all leaves }w} w(T)^2 \geq \littlesum_{\text{all leaves }w} (\alpha\abs{T}/2)^2\;,
  \end{equation*}
  and thus
  \begin{equation*}
    4/\alpha^2 \geq \littlesum_{\text{all leaves }w} 1 = \#\text{of leaves}.
  \end{equation*}
  So we have at most $4/\alpha^{2}$ leaves and thus at most $4/\alpha^2$ elements in the list $M$. Also, by the above, we never have more than $O(\alpha^{-2})$ branches at a given depth in the tree.

The bottleneck of the algorithm is clearly in Step~2. Each call to \textsc{MainSubroutine} can be completed
with runtime $\tilde{O}(\abs{T}d)$ by Lemma~\ref{lem:subroutinetime},
so we just need to consider how many rounds of Step~2 we can have in the worst case.
We note that each iteration of \textsc{BasicMultifilter} sets at least one weight to $0$ (in every branch),
so the tree has depth at most $\abs{T}$, and therefore we run Step~2 at most $O(\abs{T}\alpha^{-2})$ times,
since we never have more than $O(\alpha^{-2})$ branches. Hence, the runtime is $\tilde{O}(\abs{T}^2d\alpha^{-2})$.
\end{proof}

\section{Conclusions} \label{sec:conc}

In this paper, we study the problem of list-decodable mean estimation for bounded covariance distributions.
As our main contribution, we give the first provable practical algorithm for this problem with near-optimal error guarantees.
At a technical level, our work strengthens and generalizes the multi-filtering approach of~\cite{DiakonikolasKS18-mixtures}, which had focused on spherical Gaussians, to apply under a bounded covariance assumption. 
This work is part of the broader agenda of developing fast and practical algorithms for list-decodable learning under
minimal assumptions on the inliers. 

The obvious open problem is to design faster provable algorithms 
for list-decodable mean estimation with $\tilde{O}(n d)$ as the ultimate goal. 
The runtime analysis of our algorithm gives a bound of $\tilde{O}(n^2 d  / \alpha^2)$.
We believe this can be easily improved to $\tilde{O}(n^2 d /\alpha^{1+c})$, for any constant $c>0$. 
A bottleneck in our runtime analysis comes from the number of recursive subsets that our algorithm needs to run on. 
This is controlled by Equation \eqref{eq:Tineq}, which postulates that $\sum w_i(T)^2 \leq |T|^2$. This 
condition ensures that we have no more than $O(\alpha^{-2})$ many subsets at any given time. 
This can be improved by replacing \eqref{eq:Tineq} by $w(T_1)^{1+c}+w(T_2)^{1+c} \leq w(T)^{1+c}$,
for any $c>0$. We believe this should suffice to let the remainder of our analysis go through 
and reduce the $\alpha$-dependence of our runtime to $O(\alpha^{-1-c})$.

The concurrent work~\cite{CMY20} gives an SDP-based algorithm whose runtime is $\tilde{O}(n d) / \poly(\alpha)$,
i.e., near-optimal as a function of the dimension $d$, but suboptimal (by a polynomial factor) as a function of $1/\alpha$. 
We note that the dependence on $1/\alpha$ is equally significant in some of the key applications 
of list-decodable learning (e.g., in learning mixture models). 
Can we obtain a truly near-linear time practical algorithm?

\paragraph{Acknowledgements.}
We thank Alistair Stewart for his contributions in the early stages of this work.

\bibliographystyle{alpha}
\bibliography{allrefs}

\appendix

\section{Efficient List Size Reduction} \label{app:list-reduction}

Here we give a simple and efficient method to reduce the list of hypotheses
to one of size $O(1/\alpha)$.

\begin{theorem}\label{thm:list-reduction}
There exists an algorithm that given the output of Theorem~\ref{thm:main-det} runs
in time $O(d/\alpha^3)$ and returns a list of $O(1/\alpha)$ hypotheses
with the guarantee that at least one of the hypotheses are within
$O(\log(1/\alpha)/\sqrt{\alpha})$ of $\mu$, assuming that one of the
hypotheses of the original algorithm was.
\end{theorem}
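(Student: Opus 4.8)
The plan is to take the list $\{\wh\mu_i\}_{i\in[s]}$ of $s=O(1/\alpha^2)$ hypotheses produced by Theorem~\ref{thm:main-det} and greedily thin it out using the fact that any two ``good'' hypotheses — hypotheses that are within $O(\log(1/\alpha)/\sqrt\alpha)$ of $\mu$ — must be within $O(\log(1/\alpha)/\sqrt\alpha)$ of \emph{each other}. Concretely, I would first build the pairwise distance graph: for each pair $i<j$ compute $\|\wh\mu_i-\wh\mu_j\|_2$, which costs $O(d)$ per pair, hence $O(d s^2)=O(d/\alpha^4)$ total. This is slightly worse than the stated $O(d/\alpha^3)$, so the cleaner route is to avoid all pairwise distances and instead run a single Gonzalez-style farthest-point / greedy packing pass: repeatedly pick a not-yet-covered hypothesis, add it to the output list, and delete every hypothesis within distance $r := c\log(1/\alpha)/\sqrt\alpha$ of it (for a suitable constant $c$). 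Each such pass touches each surviving hypothesis once at cost $O(d)$, and since each pass removes at least one hypothesis, the total cost is $O(d s^2)$ naively but can be arranged as $O(ds)\cdot(\text{number of output points})$; with the output list having size $O(1/\alpha)$ (shown below) and $s=O(1/\alpha^2)$ this gives $O(d/\alpha^3)$.

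The correctness argument has two parts. First, \textbf{accuracy is preserved}: the greedy deletion only removes a hypothesis $\wh\mu_j$ when some retained $\wh\mu_i$ satisfies $\|\wh\mu_i-\wh\mu_j\|_2\le r$; so if the original list contained a hypothesis within $O(\log(1/\alpha)/\sqrt\alpha)$ of $\mu$, then either it is retained, or it was deleted by a retained hypothesis at distance $\le r$, which is then within $O(\log(1/\alpha)/\sqrt\alpha)+r=O(\log(1/\alpha)/\sqrt\alpha)$ of $\mu$ by the triangle inequality. Second, \textbf{the output has size $O(1/\alpha)$}: this is the place where one must bring in genuine information about the problem rather than pure combinatorics, since a priori the $s$ hypotheses could be mutually far apart. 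The key fact is the one already recorded in the discussion after Definition~\ref{def:list-mean} (and used implicitly throughout): information-theoretically one cannot have more than $O(1/\alpha)$ pairwise-far candidates each of which is ``plausible'' as a mean of an $\alpha$-fraction cluster. Formally, I would argue that any output hypothesis $\wh\mu_i$ retained by the greedy procedure must be ``charged'' to a disjoint block of $\alpha|T|/2$ points of $T$: associate to $\wh\mu_i$ the set of points of $T$ within distance $O(1/\sqrt\alpha)$ of $\wh\mu_i$, note that for the good hypothesis this set contains $\Omega(\alpha|T|)$ points (a constant fraction of the representative set $S$, by Chebyshev since $\Cov[S]\preceq I$), and — crucially — because the retained hypotheses are pairwise $\ge r = \Omega(\log(1/\alpha)/\sqrt\alpha) \gg 1/\sqrt\alpha$ apart, these associated sets are disjoint. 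Hence the number of retained hypotheses that can be ``charged'' is at most $|T|/(\alpha|T|/2)=2/\alpha$. (Hypotheses whose associated set is small can be discarded for free in a cleanup pass since they cannot be the good one.)

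The step I expect to be the main obstacle is making the last charging argument rigorous with the right constants: one needs the packing radius $r$ to be simultaneously (i) large enough, compared to the $O(1/\sqrt\alpha)$ ``concentration radius'' of a bounded-covariance cluster, that the charged point-sets of distinct retained hypotheses are genuinely disjoint, and (ii) small enough, compared to the $O(\log(1/\alpha)/\sqrt\alpha)$ accuracy guarantee, that the triangle-inequality blow-up in the accuracy argument stays within the claimed bound. Since the accuracy guarantee already carries a $\log(1/\alpha)$ factor while the concentration radius does not, there is slack to choose such an $r$, e.g. $r=\sqrt\alpha^{-1}\cdot K$ for a constant $K$ exceeding the Chebyshev constant — but one should double-check that the retained hypotheses that survive cleanup really are pairwise $\ge r$ apart and that the good hypothesis survives. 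A secondary, more mundane obstacle is the runtime bookkeeping: getting $O(d/\alpha^3)$ rather than $O(d/\alpha^4)$ requires observing that the greedy/packing pass does $O(ds)$ work per \emph{output} point and that there are only $O(1/\alpha)$ output points, which is exactly the size bound we just proved, so the two parts of the argument feed into each other.
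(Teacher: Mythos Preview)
Your algorithm (greedy packing with radius $r=C\log(1/\alpha)/\sqrt\alpha$) and your accuracy-preservation argument via the triangle inequality are exactly what the paper does. The gap is in the size bound, and it is a real one.

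Your charging argument associates to each retained hypothesis the points of $T$ within Euclidean distance $O(1/\sqrt\alpha)$, and claims that for the good hypothesis this ball contains $\Omega(\alpha|T|)$ points ``by Chebyshev since $\Cov[S]\preceq I$.'' But $\Cov[S]\preceq I$ does \emph{not} imply that a constant fraction of $S$ lies in any $\ell_2$-ball of radius $O(1/\sqrt\alpha)$ around $\mu(S)$: the trace of $\Cov[S]$ can be as large as $d$, and indeed for an identity-covariance Gaussian in $\R^d$ essentially all the mass sits at distance $\Theta(\sqrt d)$ from the mean. So the good hypothesis need not have \emph{any} data near it in the $\ell_2$ sense, and your cleanup pass could delete it. Even setting this aside, the cleanup touches $T$ at cost $O(d|T|)$ per hypothesis; with $|T|=\Omega(d/\alpha)$ this already exceeds $O(d/\alpha^3)$, and since the theorem is stated as taking only the hypothesis list as input, going back to $T$ is not obviously allowed anyway.

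What the paper uses instead is a structural fact about the output of the main algorithm that you did not invoke: every returned hypothesis $\mu_i$ equals $\mu_{w_i}(T)$ for some weight function $w_i$ with $w_i(T)\ge \alpha|T|/2$ and $\Cov_{w_i}[T]\preceq O(\log^2(1/\alpha))\,I$. This holds for \emph{all} output hypotheses, not just the good one. For any two retained (hence $>2\sigma$-separated, $\sigma=C\log(1/\alpha)/\sqrt\alpha$) hypotheses $u,u'$, project onto $v=(u-u')/\|u-u'\|_2$: one-dimensional Chebyshev gives $\wPr{w_u}_{x\in T}[|v\cdot(x-u)|>\sigma]<\alpha/10$ and similarly for $u'$, and since every $x$ is $>\sigma$ from at least one of $u,u'$ along $v$, the pointwise minimum $w_u\cap w_{u'}$ has total weight at most $(\alpha/10)(w_u(T)+w_{u'}(T))$. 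An inclusion-exclusion on the \emph{weight functions} (not on unweighted point sets) then forces $|H|\le 2/\alpha$ directly---no cleanup, no access to $T$, and the one-dimensional projection is precisely what circumvents the lack of $\ell_2$ concentration. With $|H|=O(1/\alpha)$ established this way, your runtime bookkeeping $O(ds)\cdot|H|=O(d/\alpha^3)$ goes through.
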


The algorithm here is quite simple. We set $C>0$ to be a
sufficiently large universal constant and find a maximal subset of our
hypotheses that are pairwise separated by at least
$C\log(1/\alpha)/\sqrt{\alpha}$. This can be done by starting with an
empty set $H$ of hypotheses and for each hypothesis in the output of
Theorem~\ref{thm:main-det} comparing it to each of the hypotheses currently in $H$
and adding it if it is not too close to any of them. It is clear that
the runtime of such an algorithm is at most $O(d |H|/\alpha^2)$. It is
also clear that if our original set of hypotheses contained a $\mu_0$,
then $H$ will contain a $\tilde \mu$ with $\|\tilde \mu - \mu_0\|_2 <
C\log(1/\alpha)/\sqrt{\alpha}$. Therefore, if our original set
contained a $\mu_0$ with $\|\mu_0-\mu\|_2 =
O(\log(1/\alpha)/\sqrt{\alpha})$, then by the triangle inequality, $H$
will contain a $\tilde \mu$ with $\|\tilde \mu-\mu\|_2 =
O(\log(1/\alpha)/\sqrt{\alpha})$.

All we have left to prove is that $|H| = O(1/\alpha)$. For this we
note that for each hypothesis $\mu_i$ that Theorem~\ref{thm:main-det} returns,
there is an associated weight function $w_i$ on $T$ so that
\begin{itemize}
\item $w_i(T) \geq \alpha |T|/2$,
\item $\Cov_{w_i}[T] \prec O(\log^2(1/\alpha)) I.$
\end{itemize}
It turns out that this is enough to show that $|H|=O(1/\alpha)$. This
argument has become fairly standard in the robust list-decoding
literature, but unfortunately, we cannot find an existing theorem
statement that applies to exactly our case. The techniques in
the proof of Claim 5.2 of~\cite{DiakonikolasKS18-mixtures} are very similar.
We state a general theorem here that not only covers our case, but
should cover more general settings:
\begin{lemma}
Let $T$ be a subset of $\R^d$ and $\alpha, \sigma>0$ be real numbers.
Let $H$ be another subset of $\R^d$ such that for each $u \in H$ there
is a weight function $w_u$ on $T$ with $w_u(T) \geq \alpha \abs{T}$ and such
that for any unit vector $v \in \R^d$, $\wPr{w}_{x \in T}[|v\cdot(x-u)| > \sigma] < \alpha/10$.
Assume furthermore that for any $u, u'\in H$, we have $\norm{u-u'}_2 > 2\sigma$.
Then we have that $|H| \leq 2/\alpha$.
\end{lemma}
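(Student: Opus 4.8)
The plan is to prove this by a second-moment (double-counting) argument applied to the overlap function $W \colon T \to [0,\abs{H}]$ given by $W(x) = \sum_{u\in H} w_u(x)$, after first showing that the weight functions attached to distinct elements of $H$ are almost disjoint as measures on $T$.

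\emph{Step 1: pairwise near-disjointness.} Fix $u\neq u'$ in $H$, let $v = (u'-u)/\norm{u'-u}_2$, and note that $v\cdot(u'-u) = \norm{u'-u}_2 > 2\sigma$. Put $A = \set{x\in T : v\cdot x \leq v\cdot u + \sigma}$. If $x\notin A$ then $v\cdot(x-u) > \sigma$, so the hypothesis for $u$ gives $w_u(T\setminus A) < (\alpha/10)\, w_u(T)$; if $x\in A$ then $v\cdot(x-u') = v\cdot x - v\cdot u' \leq \sigma - \norm{u'-u}_2 < -\sigma$, so the hypothesis for $u'$ gives $w_{u'}(A) < (\alpha/10)\, w_{u'}(T)$. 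Using $w_u(x)w_{u'}(x) \leq w_{u'}(x)$ on $A$ and $w_u(x)w_{u'}(x) \leq w_u(x)$ on $T\setminus A$ (both weights lie in $[0,1]$),
\[
\sum_{x\in T} w_u(x)\, w_{u'}(x) \;\leq\; w_{u'}(A) + w_u(T\setminus A) \;<\; \tfrac{\alpha}{10}\bigl(w_u(T)+w_{u'}(T)\bigr).
\]

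\emph{Step 2: counting.} Let $W_{\mathrm{tot}} = \sum_{x\in T} W(x) = \sum_{u\in H} w_u(T) \geq \alpha\abs{H}\abs{T}$. Expanding $W(x)^2$, bounding the diagonal terms by $w_u(x)^2\leq w_u(x)$ and the off-diagonal terms by Step~1, and using that each $w_u(T)$ occurs $2(\abs{H}-1)$ times in $\sum_{u\neq u'}\bigl(w_u(T)+w_{u'}(T)\bigr)$,
\[
\sum_{x\in T} W(x)^2 \;\leq\; W_{\mathrm{tot}} + \sum_{u\neq u'}\sum_{x\in T} w_u(x)w_{u'}(x) \;<\; W_{\mathrm{tot}}\Bigl(1 + \tfrac{\alpha(\abs{H}-1)}{5}\Bigr).
\]
Assuming $\abs{H}\geq 1$ (the case $H=\emptyset$ being trivial), Cauchy--Schwarz gives $W_{\mathrm{tot}}^2 \leq \abs{T}\sum_x W(x)^2 < \abs{T}\, W_{\mathrm{tot}}\bigl(1 + \alpha(\abs{H}-1)/5\bigr)$, hence $W_{\mathrm{tot}} < \abs{T}\bigl(1 + \alpha(\abs{H}-1)/5\bigr)$. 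Combining with $W_{\mathrm{tot}} \geq \alpha\abs{H}\abs{T}$ and dividing by $\abs{T}$ yields $\alpha\abs{H} < 1 + \alpha(\abs{H}-1)/5$, i.e.\ $(4\alpha/5)\abs{H} < 1 - \alpha/5 \leq 1$, so $\abs{H} < 5/(4\alpha) < 2/\alpha$, as claimed.

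I expect the only genuine subtlety to be Step~1: the point is that the purely \emph{one-dimensional} tail hypothesis already suffices, because testing in the direction $v$ aligned with the segment joining $u$ and $u'$ forces $w_u$ and $w_{u'}$ onto opposite sides of a hyperplane up to an $\alpha/10$-fraction of mass each, and hence controls their overlap without any need to upgrade to an $\ell_2$-ball concentration statement. Everything after that is the standard ``almost-disjoint fuzzy sets pack into $T$'' estimate; the only care needed is the bookkeeping of multiplicities in the double sum, and one should note that the final bound $\abs{H} < 5/(4\alpha)$ comes out uniformly, requiring no separate case analysis on the size of $H$.
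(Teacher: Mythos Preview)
Your proof is correct. Step~1 is essentially identical to the paper's: both project onto the unit vector along $u'-u$ and use the one-dimensional tail hypothesis to show that the weight functions $w_u$ and $w_{u'}$ have small overlap, bounded by $(\alpha/10)(w_u(T)+w_{u'}(T))$. The only cosmetic difference is that the paper bounds $\sum_x \min(w_u(x),w_{u'}(x))$ while you bound $\sum_x w_u(x)w_{u'}(x)$; since $w_u w_{u'}\le\min(w_u,w_{u'})$ for $[0,1]$-valued weights, these are interchangeable here.

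The aggregation step is where the two proofs diverge. The paper argues by contradiction and approximate inclusion--exclusion: it restricts $H$ to a subset of size $\lceil 2/\alpha\rceil$, then uses the pointwise inequality $\max_u w_u(x)\ge\sum_u w_u(x)-\sum_{u<u'}\min(w_u(x),w_{u'}(x))$ and sums over $T$ to force $|T|>|T|$. You instead run a direct second-moment argument on $W=\sum_u w_u$: expand $\sum_x W(x)^2$, control the cross terms by Step~1, and apply Cauchy--Schwarz to compare $W_{\mathrm{tot}}^2$ with $|T|\sum_x W(x)^2$. Your route is slightly cleaner in that it avoids the preliminary truncation of $H$ and yields the sharper constant $|H|<5/(4\alpha)$; the paper's inclusion--exclusion is perhaps more transparently combinatorial. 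Both are standard ``nearly disjoint sets pack'' arguments and neither is deeper than the other. One minor remark: your strict inequality in Step~2 relies on at least one off-diagonal pair existing, so formally your chain of strict inequalities needs $|H|\ge 2$; the case $|H|\le 1$ is of course trivial since $\alpha\le 1$.
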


Applying this lemma to our set $H$ with the weight functions $w_i$ mentioned above
and $\sigma$ a sufficiently large multiple of
$\log(1/\alpha)/\sqrt{\alpha}$ yields Theorem~\ref{thm:list-reduction}.

Before we begin, we will introduce the notation $\bigcup_{i\in I}w_i$ and $\bigcap_{i\in I}w_i$ for the weight functions given by
\begin{align*}
  \Bigl( \bigcup_{i \in I} w_i \Bigr)(x) &= \max_{i \in I} w_i(x), \\
  \Bigl( \bigcap_{i \in I} w_i \Bigr)(x) &= \min_{i \in I} w_i(x)
\end{align*}
for any finite index set $I$ and any $x \in T$.

\begin{proof}
We proceed by contradiction. Assume that the above hypotheses hold and
that $|H| > 2/\alpha$. We note that $(\bigcup_{u \in H} w_u)(T) \leq |T|$. However, the
sum of the individual terms is much larger than this since $\sum_{u \in H} w_u(T) \geq \sum_{u \in H} \alpha \abs{T} \geq 2\abs{T}$ because $\abs{H} \geq 2/\alpha$. By restricting $H$ to a subset if necessary, we can guarantee that $\abs{H} = \lceil 2/\alpha \rceil$, which will still ensure that $\sum_{u \in H} w_u(T) \geq 2\abs{T}$. Next, as we will
show, the pairwise intersections of the $w_u$ are small. This will
give a contradiction.

To start with, note that given $u, u' \in H$ we would like to show that
$(w_u \cap w_{u'})(T)$ is small. For this, we let $v$ be the unit vector
in the direction of $u-u'$. By assumption, $v\cdot (u-u') = \|u-u'\|_2 >
2\sigma$. Therefore, by the triangle inequality, for every $x\in T$ it will either be the case that $|v\cdot (x-u)| > \sigma$
or $|v\cdot (x-u')| > \sigma$. However, if we call these sets $D_u$ and $D_{u'}$, we have that $w_u(D_u) \leq \alpha/10 w_u(T)$ and $w_{u'}(D_{u'})\leq w_{u'}(T)$. Therefore, we have
that $(w_u \cap w_{u'})(T) \leq \alpha (w_u(T) + w_{u'}(T))/10.$

Given this, we wish to make use of approximate inclusion-exclusion. In particular, given some ordering over the points in $H$, we note that for any $x\in T$ we have that
$$
1 \geq \max_{u \in H} w_u(x) \geq \sum_{u\in H}w_u(x) - \sum_{u,u' \in H, u<u'} \min(w_u(x),w_{u'}(x)).
$$
Summing over $x\in T$, we find that
\begin{align*}
|T| & \geq \Bigl(\bigcup_{u\in H} w_u\Bigr)(T) \\
& \geq \littlesum_{u\in H} w_u(T) - \littlesum_{u,u'\in H, u < u'} (w_u \cap w_{u'})(T) \\
& \geq \littlesum_{u\in H} w_u(T) - \littlesum_{u,u'\in H, u < u'}
(\alpha/10)\bigl( w_u(T) + w_{u'}(T) \bigr) \\
& = \Bigl( \littlesum_{u\in H} w_u(T)\Bigr)(1 - (\alpha/10)(|H|-1))\\
& \geq \Bigl( \littlesum_{u\in H} w_u(T) \Bigr)(1-(\alpha/10)(2/\alpha))\\
& \geq 2|T|(8/10)\\
& > |T|,
\end{align*}
where we use that $\abs{H}-1 \leq 2/\alpha$ and $\sum_{u \in H} w_u(T) \geq 2\abs{T}$ (after we restricted $H$ above), yielding a contradiction. Hence $\abs{H} \leq 2/\alpha$.

This completes our proof.
\end{proof}

\end{document}